\documentclass[11pt]{article}
\usepackage[preprint]{acl}
\usepackage{times}
\usepackage{latexsym}
\usepackage{url}            
\usepackage{booktabs}       
\usepackage{amsfonts}       
\usepackage{nicefrac}       
\usepackage{microtype}      
\usepackage{xcolor}         
\usepackage{latexsym}
\usepackage{amssymb}
\usepackage{amsmath}
\usepackage{booktabs}
\usepackage{enumerate}
\usepackage{graphicx}
\usepackage{subfigure}
\usepackage{xspace}
\usepackage{float}
\usepackage{bbm}
\usepackage{bm}
\usepackage{multirow}
\usepackage{booktabs}
\usepackage{color}
\usepackage{framed}
\usepackage{stfloats}
\usepackage{amssymb}
\usepackage{pifont}
\usepackage{arydshln}
\usepackage{enumitem}
\usepackage{wrapfig}
\usepackage{algorithmic}
\usepackage{algorithm}
\usepackage{makecell}

\usepackage[T1]{fontenc}

\usepackage[utf8]{inputenc}

\usepackage[most]{tcolorbox}
\usepackage{tabularx}

\usepackage{hyperref}

\usepackage{microtype}

\usepackage{inconsolata}
\usepackage{multirow}
\usepackage[T1]{fontenc}
\usepackage{colortbl}
\definecolor{maroon}{cmyk}{0,0.87,0.68,0.32}

\usepackage{amsthm} %
\theoremstyle{plain}
\newtheorem{theorem}{Theorem}[section]
\newtheorem{proposition}[theorem]{Proposition}

\theoremstyle{definition}

\theoremstyle{remark}

\title{Mining or Synthesis? Rethinking Exploration Efficiency\\ in Iterative Alignment of Mathematical Reasoning}
\author{
Jun Rao$^{1}$\thanks{~Co-First Authors.},
 Zixiong Yu$^{2}$$^{*}$,~
 Xuebo Liu$^{1}$\thanks{~Corresponding Authors.},
 Guhan Chen$^{4}$,~
 Jing Li$^{1}$,~\\
 \bf{Hejin Wang}$^{3}$,~
 \bf{Jiansheng Wei}$^{2}$,~
  \bf{Xiaojun Meng}$^{2}$$^{\dagger}$, and
    \bf{Min Zhang}$^{1}$\\
    \textsuperscript{\rm1}Institute of Computing and Intelligence, Harbin Institute of Technology, Shenzhen, China \\
    \textsuperscript{\rm2}Huawei Large Model Data Technology Lab~~~\textsuperscript{\rm3}Huawei Multimodal Model Lab\\
    \textsuperscript{\rm4}Department of Statistics and Data Science, Tsinghua University, Beijing, China
    \\
    \texttt{rao7jun@gmail.com},~\texttt{\{yuzx19,wanghj20\}@tsinghua.org.cn},~\\\ \texttt{\{liuxuebo,zhangmin2021,li.jing\}@hit.edu.cn,~}\\
    \texttt{chen-gh23@mails.tsinghua.edu.cn},~\texttt{\{weijiansheng,xiaojun.meng\}@huawei.com}
    }

\begin{document}







\maketitle
\newcommand{\method}{PACE\xspace}
\begin{abstract}
Iterative Direct Preference Optimization (DPO) has emerged as a widely used paradigm for aligning Large Language Models on reasoning tasks. Existing approaches typically rely on Best-of-N sampling ($N\geq8$) to mine positive trajectories from the distribution tail. In this work, we show that in mathematical reasoning, increasing $N$ yields diminishing returns while increasing verifier-induced false-positive risk and the distribution shift required for policy updates.
To address this, we introduce PACE (Proximal Alignment via Corrective Exploration), a generation-based corrective framework that replaces exhaustive mining with low-budget exploration ($2\leq N\leq3$). Rather than searching for increasingly rare positive samples, PACE synthesizes high-fidelity preference pairs from failed explorations through corrective hindsight refinement and verification-guided filtering.
Empirically, PACE matches or exceeds the performance of DPO-R1 ($N=16$) while using about $1/5$ of the compute, and remains robust under 20\% label corruption, where high-$N$ baselines exhibit substantially higher noise exploitation.
\end{abstract}

\section{Introduction}
The alignment of Large Language Models (LLMs) has evolved from static Supervised Fine-Tuning (SFT) toward dynamic preference-based alignment methods. 
Unlike online reinforcement learning (RL) methods such as PPO~\cite{ppo} or GRPO~\cite{grpo}, iterative DPO (IDPO) optimizes an offline preference objective over iteratively collected preference pairs, making it a practical and widely adopted choice for large-scale alignment under constrained compute budgets~\cite{dpo,pang2024iterativereasoningpreferenceoptimization,yuan2024self,zhangonline,rao2026dynamicsamplingadaptsiterative}. In this setting, the model acts as both a learner and a data generator: it iteratively explores reasoning paths and updates its policy to increase the likelihood of preferred trajectories over rejected ones.

To construct high-quality pairs $(y_w, y_l)$ within this loop, a common practice is Best-of-N (BoN) sampling \cite{llama3,liu2024statistical}. This strategy relies on ``brute-force'' exploration: for a given prompt, the model generates a large pool of candidate responses (e.g., $N=16$). A verifier or reward model then acts as a filter, selecting a high-scoring trajectory as the winner $y_w$ and a lower-scoring trajectory as the loser $y_l$. The underlying assumption driving this approach is that larger exploration budgets yield higher-quality positive samples, which in turn improves alignment.

However, we challenge the optimality of this \emph{scaling exploration paradigm}. We argue that high-$N$ sampling not only increases computational cost but also amplifies systematic optimization risks. 
Prior work identifies that aggressive BoN exacerbates reward hacking~\cite{gao2023scaling} and induces distribution shift~\cite{lambert2023alignment}. Yet these analyses do not characterize the relationship between exploration budget and optimization risk. 
We show that larger $N$ increases the marginal false-positive risk of newly mined trajectories and the distribution shift required for policy updates. 
This reframes the question ``how large should $N$ be?'' as a risk-sensitive trade-off, motivating our design principle of reducing unnecessary exploration while preserving useful corrective signals.

Specifically, we identify two risks of high-$N$ sampling. First, BoN acts as an adversarial miner against the reward function. The marginal positive samples introduced by increasing $N$ are more likely to be false-positive trajectories, exploiting verifier noise or spurious correlations (Proposition~\ref{prop:fp}). Consequently, DPO-R1~\cite{zhangonline} may reinforce ``lucky guesses'' rather than robust reasoning. 
Second, even with a perfect verifier, mining $y_w$ from the extreme tail of the generation distribution, i.e., selecting trajectories that are found only after many sampling attempts, induces significant distribution shift. To learn such a trajectory, the model must update toward outlier regions with low probability support, thereby exacerbating optimization instability and excessive policy drift~\cite{NEURIPS2020_0d2b2061,lambert2023alignment}. We formalize this effect by deriving a KL divergence lower bound that scales as $\Omega(\log N)$ (Proposition~\ref{prop:kl}).

We propose PACE (Proximal Alignment via Corrective Exploration), a data-efficient framework that achieves strong alignment performance with a small sampling budget ($2\leq N \leq3$). Our core insight is that gradient quality matters more than sample perfection. 
Unlike DPO-R1, which selects heterogeneous samples from a large candidate pool, PACE synthesizes high-information pairs from failed attempts. When the model generates an incorrect solution during exploration, we rectify the reasoning path via hindsight correction. 
This process pairs the original error (Hard Negative) with the corrected path (Proximal Positive), forming a proximal preference pair. Because these two trajectories share semantic structure but diverge in logical validity, they provide a sharper, more stable gradient signal than BoN-mined DPO-R1 pairs. Our contributions are as follows:
\begin{itemize}[]
\item We formalize the trade-off between exploration breadth and robustness, showing that larger $N$ increases false-positive risks and distribution shifts (Propositions~\ref{prop:fp}--\ref{prop:kl}). 
\item We propose PACE, which replaces selection-based Best-of-N mining with a generation-based corrective strategy, achieving a 4--5$\times$ speedup over DPO-R1 ($N=16$) with comparable or better performance. 
\item Benchmarks show PACE matches or exceeds DPO-R1 ($N=16$). Notably, under 20\% label noise, PACE remains robust whereas DPO-R1 suffers from severe noise exploitation and poor stability.
\end{itemize}

\section{Related Work}

\paragraph{Iterative Alignment and the Limits of Best-of-N} Iterative DPO and related preference-based methods, such as SPIN~\citep{chen2024selfplayfinetuningconvertsweak} and Self-Rewarding LMs \citep{yuan2024self}, have become widely used paradigms for LLM alignment. A common strategy within this framework is BoN sampling (e.g., DPO-R1,~\citealp{zhangonline}), which filters high-quality trajectories from large candidate pools~\cite{llama,llama3,nakano2021webgpt}. While effective for static datasets, scaling $N$ in iterative settings introduces fundamental risks: \citet{gao2023scaling} demonstrated that aggressive BoN exacerbates reward hacking, and \citet{lambert2023alignment} identified severe distribution shift from tail sampling. 
We extend these qualitative insights by formally characterizing the scaling behavior of both false-positive amplification and distributional shift, motivating a shift from exploration breadth to logical density. 

\paragraph{Self-Correction and Data-Efficient Learning} 
Recent work explores leveraging model outputs for self-improvement. Methods such as STaR~\cite{zelikman2022star} and RestEM~\cite{singh2024beyond} adopt generate-and-filter approaches, which can be sample-inefficient because failures are discarded. Inference-time correction methods (Self-Refine,~\citealp{madaan2023selfrefine}; Reflexion,~\citealp{shinn2023reflexion}) incur significant deployment overhead without internalizing corrections into model weights. SAI-DPO~\cite{rao2026dynamicsamplingadaptsiterative} operates on data selection, whereas PACE operates on data synthesis. 
Unlike prior work, PACE treats failed explorations as Hard Negatives and synthesizes proximal positive trajectories via hindsight correction, yielding high-information preference pairs that improve alignment efficiency.







\section{Preliminaries}
\label{sec:prelim}

We formalize phenomena that undermine the effectiveness of high-$N$ Best-of-N sampling. All proofs are provided in Appendices~\ref{sec: theory}--\ref{Appendix: theory_KL}.

\subsection{Problem Formulation}
\label{sec:prelim:formulation}

Let $x$ be a problem, $y$ a trajectory sampled from policy $\pi_\theta$, and $R(y)\in\{0,1\}$ an ideal correctness indicator. A practical verifier $V$ has conditional false positive rate $\epsilon = \mathbb{P}(V=1 \mid R=0)$. Define:
\begin{itemize}[]
    \item \textbf{Intrinsic correctness:} $\alpha = \mathbb{P}(R=1)$;
    \item \textbf{Observed pass rate:} $\rho = \alpha + (1-\alpha)\epsilon$.
\end{itemize}
To simplify the analysis, we focus on marginal samples that first pass the practical verifier only when the sampling budget reaches $N$, corresponding to an empirical observed pass rate $\hat{\rho}=1/N$. Under smaller budgets, such samples would not provide a verified winner and thus could not form valid positive-negative preference pairs, so they would not enter the preference dataset; as $N$ increases, they are progressively introduced into training.

\subsection{False Positive Amplification}
\label{sec:prelim:fp}
For a solution passing the practical verifier $V$, its conditional false-positive probability is $\Psi(\alpha)=(1-\alpha)\epsilon/\rho$. Although this quantity does not explicitly depend on $N$ from a prior perspective, the dependence on $N$ emerges once we condition on the empirical pass rate of marginal samples.

\begin{proposition}[Simplified Marginal FP Monotonicity]
\label{prop:fp}
For marginal tasks with empirical pass rate $\hat\rho = 1/N$, the expected false-positive rate $\bar{\Psi}(N)$ is strictly increasing in $N$.
\end{proposition}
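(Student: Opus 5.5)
The plan is to make the conditioning on the empirical pass rate explicit, and then reduce the claim to a monotonicity statement about posterior expectations. The only model information we use is that a task's observed pass rate is $\rho = \alpha + (1-\alpha)\epsilon$. For a marginal task, the budget-$N$ observation means that the winner is, to first order, found at rate $\hat\rho = 1/N$.

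\textbf{Step 1: the plug-in case.} In the simplest reading, I identify $\rho$ with $\hat\rho$ and invert the pass-rate relation. This gives the implied intrinsic correctness
\[
\alpha(N) = \frac{1/N - \epsilon}{1-\epsilon},
\]
which is valid in the regime $1/N > \epsilon$, where a marginal winner is still more likely to be genuine than purely noise-driven. Substituting into the conditional false-positive probability gives
\[
\bar\Psi(N) = \frac{(1-\alpha(N))\,\epsilon}{1/N} = \frac{N\epsilon\,(1-1/N)}{1-\epsilon} = \frac{\epsilon\,(N-1)}{1-\epsilon}.
\]
This is strictly increasing in $N$ for any $\epsilon \in (0,1)$. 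Equivalently, $\Psi(\alpha) = (1-\alpha)\epsilon/(\alpha+(1-\alpha)\epsilon)$ is strictly decreasing in $\alpha$, since its derivative has sign $-\epsilon$. Meanwhile $\alpha(N)$ is strictly decreasing in $N$, so the composition $\Psi(\alpha(N))$ is strictly increasing. I would present this composition argument as the core of the proof.

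\textbf{Step 2: the expected version.} To justify the word ``expected'' in $\bar\Psi(N)$, I would place a prior on $\alpha$ over tasks. I then define
\[
\bar\Psi(N) = \mathbb{E}\!\left[\Psi(\alpha) \,\middle|\, \text{first verified success at trial } N\right].
\]
The likelihood of the first verified success occurring at trial $N$ is geometric, namely $\rho(\alpha)\,(1-\rho(\alpha))^{N-1}$. Since $\rho$ is increasing in $\alpha$, the ratio of the posteriors at $N+1$ and at $N$ is proportional to $1-\rho(\alpha)$, which is strictly decreasing in $\alpha$. The posteriors are therefore ordered in the monotone likelihood ratio sense, hence also stochastically, with larger $N$ shifting mass toward smaller $\alpha$. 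Because $\Psi$ is strictly decreasing in $\alpha$, the posterior expectation of $\Psi$ strictly increases, provided the prior is not a point mass. If it is a point mass, I fall back to the plug-in argument of Step 1.

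\textbf{Main obstacle.} The hard part is deciding which formalization of ``tasks with empirical pass rate $1/N$'' the proposition intends. The plug-in version is clean, but it requires the constraint $1/N > \epsilon$ for $\alpha(N) \ge 0$; without that constraint I would clip $\alpha$ at $0$, where $\Psi$ saturates at $1$. The MLR version needs non-degenerate priors to secure strictness.

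I would state the simplified result under the plug-in assumption with $N < 1/\epsilon$, which seems to match the label ``Simplified''. I would then remark that the Bayesian version follows by the MLR/stochastic-dominance argument above.
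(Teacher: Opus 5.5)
Your Step 2 is essentially the paper's proof, so the proposal is correct. The paper takes a uniform prior on $\alpha$ and the likelihood $\rho(\alpha)(1-\rho(\alpha))^{N-1}$ for $D_N=\{1,N\}$, which gives the same posterior as your geometric ``first success at trial $N$'' model. It shows the smaller-$N$ observation has the monotone likelihood ratio property over the larger-$N$ one, passes to first-order stochastic dominance, and applies this to the strictly decreasing $\Psi$.

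Since the paper's $\bar\Psi(N)$ is exactly this posterior expectation, Step 2 should be the proof rather than a closing remark. The plug-in value $\epsilon(N-1)/(1-\epsilon)$ from Step 1, which the paper uses only as motivation, is strictly increasing only for $N<1/\epsilon$ and is clipped to the constant $1$ beyond. The point-mass fallback also does not work, because under a point-mass prior the posterior mean is constant in $N$.
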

See Appendix~\ref{Appendix: theory_FP} for details. Intuitively, when a model repeatedly fails on a problem and only obtains a verifier-passing answer after many attempts, the posterior likelihood that this answer is a false positive becomes larger.




To validate the preceding theoretical prediction, we conduct an additional experiment. Table~\ref{tab:verifier_fpr} shows that the false-positive rate increases with the required sampling budget; meanwhile, LLM judges underestimate verifier noise. Specifically, we stratify prompts by the sampling budget required to obtain a mixed verifier outcome, randomly sample verifier-positive responses, and re-evaluate their correctness using human experts and LLM judges. See Appendix~\ref{sec:False-Positive Experiment} for the full experimental setup.






\begin{table}[h]
\centering
\scalebox{0.72}{
\begin{tabular}{@{}lccc@{}}
\toprule
\multirow{2}{*}{\textbf{Evaluator}} &
\multicolumn{3}{c}{\textbf{False-Positive Proportion (\%)}} \\
\cmidrule(l){2-4}
& \shortstack{\small{\textbf{2 attempts}}\\\small{\textbf{(Easy)}}} 
& \shortstack{\small{\textbf{3-4 attempts}}\\\small{\textbf{(Medium)}}} 
& \shortstack{\small{\textbf{5-8 attempts}}\\\small{\textbf{(Hard)}}}  \\
\midrule

\rowcolor[HTML]{F2F2F2}
\textbf{Human Evaluation} 
& \textbf{28.0} & \textbf{35.0} & \textbf{49.0} \\

DS-R1-Qwen-32B 
& 11.8 & 18.4 & 24.4 \\

Qwen2.5-72B-IT 
& 6.7 & 9.5 & 12.6 \\

\bottomrule
\end{tabular}
}
\caption{\textbf{False-Positive Rates.} The ``attempts'' column indicates that, within the corresponding sampling range, at least one verifier-positive and one verifier-negative response are observed, while under smaller budgets all sampled responses are verifier-negative. }
\label{tab:verifier_fpr}
\end{table}

\subsection{Trust Region Violation}
\label{sec:prelim:kl}
Even with a perfect verifier $V$ ($\epsilon=0$), a correct trajectory obtained only after many sampling attempts still lies in the tail of the reference generation distribution $\pi_{\mathrm{ref}}$. Therefore, making the updated distribution $\pi_{\mathrm{new}}$ achieve target correctness $\eta$ on a marginal sample whose correct trajectory is obtained only after $N$ sampling attempts requires the following distribution-shift cost:
\begin{proposition}[Simplified KL Lower Bound]
\label{prop:kl}
For tail trajectories with $\hat\rho = 1/N$, we have
\begin{equation*}
D_{\mathrm{KL}}(\pi_{\mathrm{new}} \| \pi_{\mathrm{ref}}) \;\gtrsim\; \eta \log N - H(\eta) + {\Omega}(1/N),
\end{equation*}
where $H(\eta)$ is the binary entropy function. 
\end{proposition}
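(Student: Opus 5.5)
The plan is to collapse the trajectory space onto the binary event ``correct vs.\ incorrect'' and then apply the data-processing inequality for KL divergence. Let $C$ be the set of correct trajectories for the marginal prompt $x$. Under $\pi_{\mathrm{ref}}$ we have $p:=\pi_{\mathrm{ref}}(C)$, and we identify $p$ with the empirical pass rate $\hat\rho=1/N$, as in the setup of Section~\ref{sec:prelim:formulation} with $\epsilon=0$. The target condition says $q:=\pi_{\mathrm{new}}(C)\ge\eta$. The map $y\mapsto \mathbbm{1}[y\in C]$ is a measurable function, so data processing gives
\begin{equation*}
D_{\mathrm{KL}}(\pi_{\mathrm{new}}\|\pi_{\mathrm{ref}})\;\ge\; d(q\,\|\,p)=q\log\frac{q}{p}+(1-q)\log\frac{1-q}{1-p}.
\end{equation*}
Equality holds when $\pi_{\mathrm{new}}$ simply reweights $\pi_{\mathrm{ref}}$ within $C$ and within its complement. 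This shows the bound is the correct order and not merely a loose estimate.

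The next step is to reduce to $q=\eta$. For fixed $p<\eta$, the binary divergence $d(q\|p)$ is increasing in $q$ on $[p,1]$, since its derivative is $\log\frac{q(1-p)}{p(1-q)}>0$ there. Because $\eta\ge 1/N$ in the regime of interest, the minimum over $q\ge\eta$ is attained at $q=\eta$. Expanding $d(\eta\|1/N)$ then gives
\begin{equation*}
d(\eta\|1/N)=\eta\log N+\eta\log\eta+(1-\eta)\log(1-\eta)-(1-\eta)\log\!\left(1-\tfrac1N\right)=\eta\log N-H(\eta)-(1-\eta)\log\!\left(1-\tfrac1N\right).
\end{equation*}

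It remains to handle the last term. Using $-\log(1-1/N)\ge 1/N$, we get $-(1-\eta)\log(1-1/N)\ge(1-\eta)/N$, which is $\Omega(1/N)$ for any fixed $\eta<1$. Combining the three displays yields exactly the claimed bound $\eta\log N-H(\eta)+\Omega(1/N)$. The $\gtrsim$ in the statement absorbs the identification $p\approx\hat\rho$.

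The main obstacle is justifying $p=\pi_{\mathrm{ref}}(C)\approx 1/N$ from the empirical definition of a marginal sample. A first success at attempt $N$ is only a noisy estimate of $p$. My first approach is to take $\hat\rho=1/N$ as the modelling assumption, as the paper's simplified setting does. If more rigor is needed, I would note two facts. First, $d(\eta\|p)$ is decreasing in $p$ on $(0,\eta)$. Second, conditioning on failure in the first $N-1$ attempts makes $p\lesssim c/N$ with high posterior probability, for instance under a uniform prior, where the posterior mean is $2/(N+1)$. Substituting $p\le c/N$ changes the bound only by the constant $\eta\log c$, which preserves the $\eta\log N$ scaling.
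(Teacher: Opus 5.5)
Your proof is correct and follows the paper's argument. Like the paper, you project onto the correctness indicator via the data-processing inequality, set $\alpha=\hat\rho=1/N$ under a perfect verifier, and expand $d(\eta\,\|\,1/N)$ into $\eta\log N - H(\eta) - (1-\eta)\log(1-1/N)$; your extra steps (monotonicity in $q$ to handle $q\ge\eta$, the explicit bound $-\log(1-1/N)\ge 1/N$, and the posterior remark mirroring the paper's Beta$(2,N)$ analysis) only make explicit what the paper leaves implicit, though note that this posterior has mean $2/(N+2)$, not $2/(N+1)$, which changes nothing.
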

See Appendix~\ref{Appendix: theory_KL} for details. Intuitively, forcing the model to learn increasingly difficult samples requires larger distribution-shift costs, which can become an important source of training instability; under smaller $N$, such samples would typically be excluded from the training set. Table~\ref{tab:kl} provides empirical support for this conclusion.









\section{Methods}
\begin{figure*}[t!]
\centering
\includegraphics[width=0.8\linewidth]{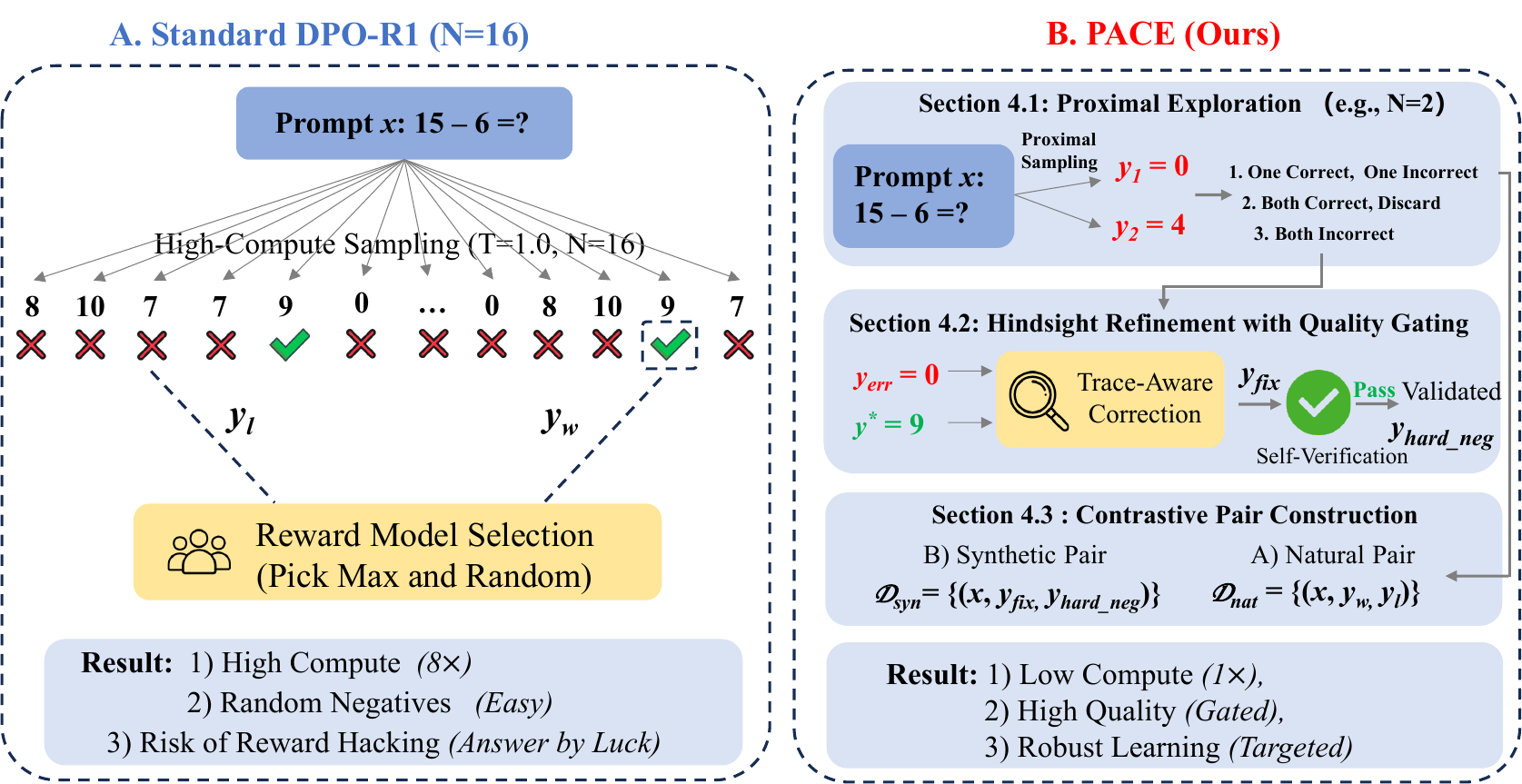}
\caption{Overview of PACE vs. Standard Best-of-N DPO. (A) Standard DPO-R1 ($N=16$): Relies on high-compute sampling to mine positive signals, risking reward hacking where models match labels through flawed logic.
(B) PACE (Ours): A three-phase pipeline: (I) Proximal Exploration ($N=2$) to minimize compute; (II) Hindsight Refinement with logical gating to synthesize verified corrections ($y_{fix}$) from failure traces ($y_{err}$); (III) Contrastive Construction of high-density pairs.
PACE achieves superior reasoning alignment with lower overhead and higher resistance to label noise by prioritizing logical density over search breadth.}
\label{fig:teaser}
\end{figure*}
In Figure~\ref{fig:teaser}, we present PACE, a framework designed to improve alignment efficiency by prioritizing logical density over search breadth. Guided by our theoretical analysis (\S\ref{sec:prelim}), PACE departs from the \textsc{Select-from-Many} paradigm of BoN, which relies on high-compute sampling to mine positive signals. Instead, it adopts a \textsc{Generate-and-Correct} paradigm. The full training loop is summarized in Algorithm~\ref{alg:pace} (Appendix~\ref{Algorithm}).

\subsection{Proximal Exploration}\label{sec:phase1}
To avoid the $\Omega(\log N)$ distribution shift inherent in high-$N$ sampling (Proposition~\ref{prop:kl}), PACE restricts the sampling budget to a small local probe of the policy's uncertainty. For a given prompt $x$, we generate exactly two trajectories $\mathcal{Y} = \{y_1, y_2\}$ from the current policy $\pi_\theta$. Generated via stochastic sampling with temperature $T=1.0$, this probes the local variance of the policy. 
This design is not merely a compute-saving heuristic; it is a \emph{structural consequence} of Proposition~\ref{prop:fp}. With $N=2$, the false positive amplification mechanism is fundamentally constrained: the probability of a marginal task entering the training set is bounded, and the expected verifier noise in incorporated samples remains minimal.

\subsection{Hindsight Refinement with Quality Gating}\label{sec:phase2}
\textbf{Trace-Aware Correction} 
Unlike standard self-correction which relies on generic prompts, we utilize a \emph{Trace-Aware} strategy. We provide the model with its own error trace $y_{{err}}$ and the ground truth label $y^*$. The instruction requires the model to identify the logical divergence between $y_{{err}}$ and $y^*$, and generate a corrected path $y_{{fix}}$.
\begin{equation}
    y_{fix} \sim \pi_\theta(\cdot \mid x, y_{err}, y^*, \text{instruction})
\end{equation}

\paragraph{The Quality Gate (Self-Verification)} The Consistency Filter is designed as a Causal Entailment check. By stripping the label and requiring deterministic re-generation, we verify that the reasoning trace $r_{\text{fix}}$ is the sufficient cause of the answer $y^*$, eliminating \emph{Rationalization} where the model hallucinates steps to justify a known outcome:
\begin{enumerate}[leftmargin=*,nosep]
    \item We strip the final answer from $y_{{fix}}$, retaining only the reasoning steps $r_{{fix}}$;
    \item We verify if the reasoning strictly entails the answer: $\mathbb{I}(G_{{greedy}}(x, r_{{fix}}) = y^*)$.
\end{enumerate}
Trajectories that fail this check are discarded, ensuring only grounded, logically valid corrections enter the training pool. This gate is critical: without it, the model would optimize toward hallucinated reasoning chains that happen to reach correct answers, reproducing the Lucky Guess phenomenon at the synthesis stage.



\subsection{Contrastive Pair Construction}\label{sec:phase3}
The core innovation of PACE lies in how it constructs the preference dataset $\mathcal{D}$ for DPO.
If both rollout samples are correct, we discard this type of sample; the remaining samples fall into two categories.
The final $y_{fix}$ has been obtained.
\\
\textbf{Scenario A: The Natural Pair (Exploration Success)} 
If one sample is correct ($y_w$) and the other incorrect ($y_l$) during the initial $N=2$ rollout, we construct a standard preference pair $(x, y_w, y_l)$. This captures data within the model's current competence, reinforcing known capabilities.
\begin{equation}
    \mathcal{D}_{nat} = \{ (x, y_w, y_l) \}
\end{equation}
\textbf{Scenario B: The Synthetic Pair (Exploration Failure)} If both initial samples are incorrect, we utilize the refined trajectory $y_{{fix}}$ from \S \ref{sec:phase2}. This indicates that the sample is located at a current defect in the model; performing supervised hindsight correction is more effective than random sampling for pulling the distribution back. We select the original incorrect sample, denoted as $y_{{hard\_neg}}$, to serve as the loser. The resulting pair is:
\begin{equation}
\mathcal{D}_{syn} = \{ (x, y_{fix}, y_{hard\_neg}) \}
\end{equation}
\paragraph{Why this matters (Gradient Efficiency)} In standard BoN, the winner and loser are often semantically distant (a random lucky guess vs. a random failure), leading to high-variance updates. In PACE, $y_{\text{fix}}$ is a \emph{Proximal Counterfactual} to $y_{\text{err}}$: they share the same prompt and potentially similar initial steps, diverging only at the critical error. Optimizing against such \textbf{Hard Negatives} provides a high-density gradient signal, effectively suppressing the specific error mode without disrupting the rest of the policy.

Finally, we use $\mathcal{L}_{\text{DPO}}$ optimizes the relative log-ratios of the pairs constructed in \S \ref{sec:phase3}:
\begin{equation*}
\begin{split}
    &\mathcal{L}_{\text{DPO}}(\pi_\theta, \pi_{\text{ref}}) = -\mathbb{E}_{(x, y_w, y_l) \sim \mathcal{D}}\\
 &\qquad\Big[\log \sigma\Big(\beta \log \tfrac{\pi_\theta(y_w|x)}{\pi_{\text{ref}}(y_w|x)} - \beta \log \tfrac{\pi_\theta(y_l|x)}{\pi_{\text{ref}}(y_l|x)}\Big)\Big].
\end{split}
\end{equation*}

\subsection{Complexity Analysis}\label{sec:efficiency_analysis}
Standard Best-of-N strategies suffer from static computational rigidity: the budget is fixed at $N$ regardless of prompt difficulty. In contrast, PACE employs a dynamic allocation strategy, expending additional compute only when epistemic uncertainty is high (i.e., exploration failure). Given that verification time is relatively negligible, we disregard this portion in our analysis.
Let $C_{\text{gen}}$ be the unit cost of generating one trajectory. The expected computational cost per prompt for PACE is:
\begin{equation}
\mathbb{E}[\text{Cost}_{\text{PACE}}] = 2 C_{\text{gen}} + P(\text{fail}) \cdot C_{\text{refine}},
\label{eq:cost}
\end{equation}
where $P(\text{fail})$ is the probability that the initial proximal exploration fails to yield a correct solution, and $C_{\text{refine}}$ is the refinement budget (typically $C_{\text{refine}} = C_{\text{gen}}$). This formulation yields three critical efficiency properties:
\begin{enumerate}[leftmargin=*,label=\textbf{\arabic*)},nosep]
    \item \textbf{Amortized Efficiency:} For easy instances where $P(\text{fail}) \to 0$, the cost converges to the lower bound $2 C_{\text{gen}}$. Standard BoN ($N=16$) remains statically inefficient at $16 C_{\text{gen}}$.
    \item \textbf{Bounded Worst-Case:} Even for hard failures ($P(\text{fail}) \to 1$), the cost is bounded by $3 C_{\text{gen}}$. With $C_{\text{refine}} = C_{\text{gen}}$, PACE achieves a $5.3\times$ speedup over the $N=16$ in the worst-case.
    \item \textbf{Self-Optimizing Dynamics:} Crucially, as the policy $\pi_\theta$ improves during DPO training, the failure rate $P(\text{fail})$ naturally diminishes. Consequently, PACE becomes progressively cheaper throughout the training lifecycle, whereas BoN costs remain constant.
\end{enumerate}

\begin{table*}[t]
\centering
\setlength{\tabcolsep}{4.5pt} 
\scalebox{0.73}{ 
\begin{tabular}{lccccccccc}
\toprule
 & \textbf{Compute} &\textbf{Trainable}&\textbf{
Processing}& \multicolumn{2}{c}{\textbf{Standard}} & \multicolumn{1}{c}{\textbf{Exam}} & \multicolumn{2}{c}{\textbf{Competition}} & \\
\cmidrule(lr){5-6} \cmidrule(lr){7-7} \cmidrule(lr){8-9}
\textbf{Method} & \textbf{Cost ($\downarrow$)}&\textbf{Pairs}&\textbf{Time($\downarrow$)}  & \textbf{Math500} & \textbf{Minerva} & \textbf{College} & \textbf{Olympiad}  & \textbf{AMC23} & \textbf{Avg.} \\
\midrule
\multicolumn{10}{c}{\textbf{\textit{Panel A: Llama-3.1-8B-Instruct}}} \\
\midrule
Instruct (Base) & - &-& -&29.0 & 16.5 & 21.3 & 7.3 & 15.3 & 17.9 \\
DPO-R1 (Low) & $N=2$&5,246&0.8 & 36.5\small{$\pm$1.4}&	\textbf{20.0\small{$\pm$1.3}}&24.3\small{$\pm$0.4}&	\textbf{9.7\small{$\pm$0.9}}		&18.0\small{$\pm$1.6}&21.7\small{$\pm$0.5} \\
DPO-R1 (Middle) &$N=4$&9,692&1.6&36.5\small{$\pm$1.7}&	17.4\small{$\pm$1.8}&23.9\small{$\pm$0.2}&	9.1\small{$\pm$0.3}	&	19.0\small{$\pm$1.4}&21.1\small{$\pm$0.8}\\
DPO-R1~\cite{zhangonline} & $N=8$&13,224 &2.4& 36.1\small{$\pm$1.1}&	15.9\small{$\pm$0.2}&23.4\small{$\pm$1.1}&	9.4\small{$\pm$1.0}	&	17.9\small{$\pm$0.2}&20.6\small{$\pm$0.5}\\
DPO-R1 (High) & $N=16$&16,241 &4.0& 33.6\small{$\pm$1.7}&	18.1\small{$\pm$0.9}&	23.9\small{$\pm$0.8}&	8.7\small{$\pm$0.3}	&	19.3\small{$\pm$0.5}&20.7\small{$\pm$0.3}
 \\
\rowcolor{gray!10} \textbf{PACE (Ours)} & $2<N<3$ &6,197&0.8& \textbf{36.6\small{$\pm$0.3}}	&19.6\small{$\pm$1.9}&	\textbf{24.6\small{$\pm$0.4}}&	9.4\small{$\pm$0.5}&		\textbf{20.6\small{$\pm$0.5}}&\textbf{22.2\small{$\pm$0.5}}

 \\
\midrule
\multicolumn{10}{c}{\textbf{\textit{Panel B: Qwen3-4B-Instruct}}} \\
\midrule
Instruct (Base) & - & -&-&84.6&	34.6&44.9&	47.4&		64.4&55.2 \\
DPO-R1 (Low) &$N=2$&3,479 &0.9& 84.2\small{$\pm$0.0}&	38.1\small{$\pm$0.9}&47.0\small{$\pm$0.2}&	50.2\small{$\pm$0.5}&63.8\small{$\pm$1.5}&56.7\small{$\pm$0.2}\\
DPO-R1 (Middle) &$N=4$&5,127&1.2&84.3\small{$\pm$0.1}&39.0\small{$\pm$0.0}&47.7\small{$\pm$0.1}	&	49.3\small{$\pm$1.0}		&67.1\small{$\pm$1.5}&57.5\small{$\pm$0.4}

 \\
DPO-R1~\cite{zhangonline} &$N=8$&8,403&2.8 & \textbf{85.1\small{$\pm$0.6}}&	38.3\small{$\pm$0.2}	&\textbf{48.0\small{$\pm$0.1}}&\textbf{50.4\small{$\pm$1.3}}	&	64.9\small{$\pm$3.4}&57.3\small{$\pm$0.4}\\
DPO-R1 (High) & $N=16$&8,964&5.2 &84.4\small{$\pm$0.0}&	\textbf{39.3\small{$\pm$1.3}}&		47.6\small{$\pm$0.1}&\textbf{50.4\small{$\pm$0.3}}&	67.2\small{$\pm$0.3}&\textbf{57.8\small{$\pm$0.3}}
\\

\rowcolor{gray!10} \textbf{PACE (Ours)} & $2<N<3$ & 10,717&1.0&83.1\small{$\pm$0.8}	&38.7\small{$\pm$0.9}	&47.3\small{$\pm$0.1}&49.9\small{$\pm$0.8}	&	\textbf{68.5\small{$\pm$0.7}}&57.5\small{$\pm$0.3}

 \\\midrule
 \multicolumn{10}{c}{\textbf{\textit{Panel C: Qwen3-8B-Instruct}}} \\
\midrule
Instruct (Base) & -&- &-& 81.2&	33.5&44.6&	49.9&	63.7&54.6
 \\
DPO-R1 (Low) &$N=2$&3,403&1.4 & \textbf{83.6\small{$\pm$0.3}}&	34.2\small{$\pm$0.6}&45.9\small{$\pm$0.1}	&49.3\small{$\pm$0.5}	&	\textbf{70.1\small{$\pm$2.3}}&56.7\small{$\pm$0.4}\\
DPO-R1 (Middle) &$N=4$&6,073&2.2&83.1\small{$\pm$0.6}	&36.4\small{$\pm$0.6}&	46.5\small{$\pm$0.1}&50.3\small{$\pm$0.2}	&	66.1\small{$\pm$0.8}&56.5\small{$\pm$0.2}

\\
DPO-R1~\cite{zhangonline} &$N=8$ &10,211&4.8& 83.3\small{$\pm$0.2}&	37.1\small{$\pm$1.0}	&\textbf{47.0\small{$\pm$0.1}}&50.3\small{$\pm$1.0}&		66.7\small{$\pm$1.2}&56.9\small{$\pm$0.0}

\\
DPO-R1 (High) & $N=16$&10,543&7.2&82.9\small{$\pm$0.2}&36.0\small{$\pm$0.0}	&46.7\small{$\pm$0.0}&\textbf{51.2\small{$\pm$0.6}}		&67.4\small{$\pm$1.1}&56.9\small{$\pm$0.2}\\
\rowcolor{gray!10} \textbf{PACE (Ours)} & $2<N<3$ &9,028& 1.7&83.2\small{$\pm$1.0}&	\textbf{38.7\small{$\pm$0.6}}&46.8\small{$\pm$0.0}&	49.4\small{$\pm$0.2}&		\textbf{70.1\small{$\pm$2.2}}&\textbf{57.6\small{$\pm$0.7}}

 \\
\bottomrule
\end{tabular}
}
\caption{\textbf{Main Results on Mathematical Reasoning.} We evaluate PACE against DPO-R1 baselines across varying sampling budgets ($N$).
We also report the average number of training samples and the processing time per sample (Processing Time), which includes the entire processing workflow time required to process a single sample.}
\label{tab:main_results_dual_model}
\end{table*}
\section{Experiments}\label{sec:Experiments}
\subsection{Setup}
We evaluate three state-of-the-art RL-tuned models: Llama-3.1-8B-Instruct and Qwen3-4B/8B-Instruct, representing the strongest reasoning capabilities under the 8B parameter class. Given that these base models have already been saturated with complex reasoning capabilities through intensive RL (GRPO/PPO), training on extended Think trajectories via DPO is computationally redundant for tasks that do not require long-form thought. 
Consequently, we adopt a ``No-Think'' mode to ensure efficient logical alignment while avoiding unnecessary overhead of over-optimizing intermediate reasoning tokens. We provide the generation, correction, and verification prompts in Appendix~\ref{app:prompts}, and qualitative case studies in Appendix~\ref{app:case_studies}.
We follow ~\citet{yang2024qwen25mathtechnicalreportmathematical} using common English
math benchmarks, details can be found in Appendix \ref{sec:datasets} and Appendix \ref{sec:train_detail}.
Methods such as STaR, Self-Refine, and RestEM date back to R1~\cite{deepseekai2025deepseekr1incentivizingreasoningcapability}. We compare current state-of-the-art methods:
\begin{itemize}[]
    \item \textbf{DPO-R1 (High, $N=8,16$):} The current industry standard for iterative alignment \cite{zhangonline}. For each prompt, we sample 8/16 trajectories at temperature $T=1.0$. The trajectory maximizing the verifier score is selected as $y_w$, and a random failing trajectory is selected as $y_l$. It represents the upper bound of high-resources.
    \item \textbf{DPO-R1 ($N=2,4$):} Compute-matched baselines restricted to the same sampling budget as PACE's exploration phase. They select the best of N samples without a corrective mechanism. These are ablations: comparisons against them demonstrate that PACE’s gains stem from the quality of corrective pairs, not merely from the quantity of exploration.
\end{itemize}

\subsection{Main Results}

We present evaluation results across five mathematical reasoning benchmarks in Table~\ref{tab:main_results_dual_model}, which reveal a fundamental limitation in standard iterative alignment and highlight the efficiency of PACE.

\paragraph{Efficiency} We compare the relative time required per sample (with 40,000 samples processed per iteration). In baselines, processing time increases with $N$. On Qwen3-4B, DPO-R1 (High, $N=16$) requires 5.2$\times$  per sample, while PACE completes in 1.0$\times$ : a $5.2\times$ acceleration that matches our theoretical worst-case bound of $5.3\times$ (Eq.~\ref{eq:cost}).

\paragraph{Effectiveness} PACE achieves significantly higher sample utilization efficiency than baseline models while maintaining minimal computational cost. On Llama-3.1-8B-Instruct, PACE achieves an average score of 22.2 by generating only 6,197 training pairs, significantly outperforming DPO-R1 (High), which consumed 16,241 pairs but only achieved 20.7 points. This reveals a failure mode of naive scaling: increasing the sampling budget $N$ grows the dataset size, yet yields lower final performance.
We attribute this to the \emph{Luck Trap}: brute-force sampling at high $N$ selects lucky guesses: solutions with correct answers but broken logic, from the distribution's tail.  
PACE employs Hindsight Refinement to transform discarded failed attempts into high-quality proximal refinement pairs.
On Qwen3-4B, PACE's score (57.5) is on par with DPO-R1 (High)'s 57.8, but PACE requires only 1.0$\times$ sampling time to generate 10,717 high-quality training pairs, whereas DPO-R1 (High) incurs 5.2$\times$ to generate 8,964 pairs. 
We attribute this to a saturation effect: Qwen3 models are already heavily RL-tuned, leaving limited headroom for iterative DPO refinement.  In such regimes PACE primarily serves as an efficient stabilizer, delivering comparable accuracy with 5$\times$ less compute and stronger noise resilience.

\subsection{Ablation Studies}
To dissect the contribution of each component, we conduct an ablation study on Llama-3.1-8B (Table~\ref{tab:ablation}). The results reveal that Trace-Aware Refinement is critical: replacing it with Label-Only guidance (providing only $y^*$ without $y_{{err}}$) degrades performance by $-4.2\%$. This confirms that conditioning on the specific error trace is essential to force comparative debugging and prevent the model from merely rationalizing the target label.

The catastrophic performance of SFT-only (11.0\%) compared to PACE (22.2\%) reveals a critical insight: for reasoning-saturated models, simply seeing correct paths is insufficient. The contrastive signal provided by the $(y_{{fix}}, y_{{err}})$ pair is mandatory to suppress specific logical error modes. Furthermore, substituting proximal pairs with random negatives results in a $-2.4\%$ drop, validating that optimization against hard, on-policy negatives yields higher-fidelity gradient signals than random baselines. Finally, removing the Consistency Filter causes a $-1.2\%$ decline, underscoring the necessity of quality gating to prevent hallucinated reasoning from polluting training dynamics.

\begin{table}[t]
\centering
\scalebox{0.65}{
\begin{tabular}{@{}llcc@{}}
\toprule
\textbf{Module} & \textbf{Ablation Variant} & \textbf{Acc. (\%)} & \textbf{$\Delta$} \\ \midrule
\textbf{PACE (Full)} & Trace-Aware + Proximal + Verified & \textbf{22.2} & - \\ \midrule
\textit{1. Refinement} & w/ Label-Only Guidance (No Trace) & 18.0 & $-4.2$ \\
& No Refinement (DPO-R1, $N=2$) & 21.7 & $-0.5$ \\ \midrule
\textit{2. Pairing} & w/ Random Negatives & 19.8 & $-2.4$ \\ 
&w/ SFT Only (on $y_{fix}$) &11.0&$-$11.2\\
\midrule
\textit{3. Gating} & w/o Consistency Filter &21.0 & $-1.2$ \\ \bottomrule
\end{tabular}
}\caption{\textbf{Ablation Study.} 
We isolate the contribution of each PACE component with a standard baseline variant. 
}
\label{tab:ablation}
\end{table}
\section{Mechanistic Analysis}
We analyze PACE's optimization behavior to validate the theoretical mechanisms in \S\ref{sec:prelim}. All experiments use Llama-3.1-8B unless specified. 
\subsection{Optimization Dynamics}
\label{sec:analysis:dynamics}

\begin{figure}[t]
    \centering
\includegraphics[width=0.9\linewidth]{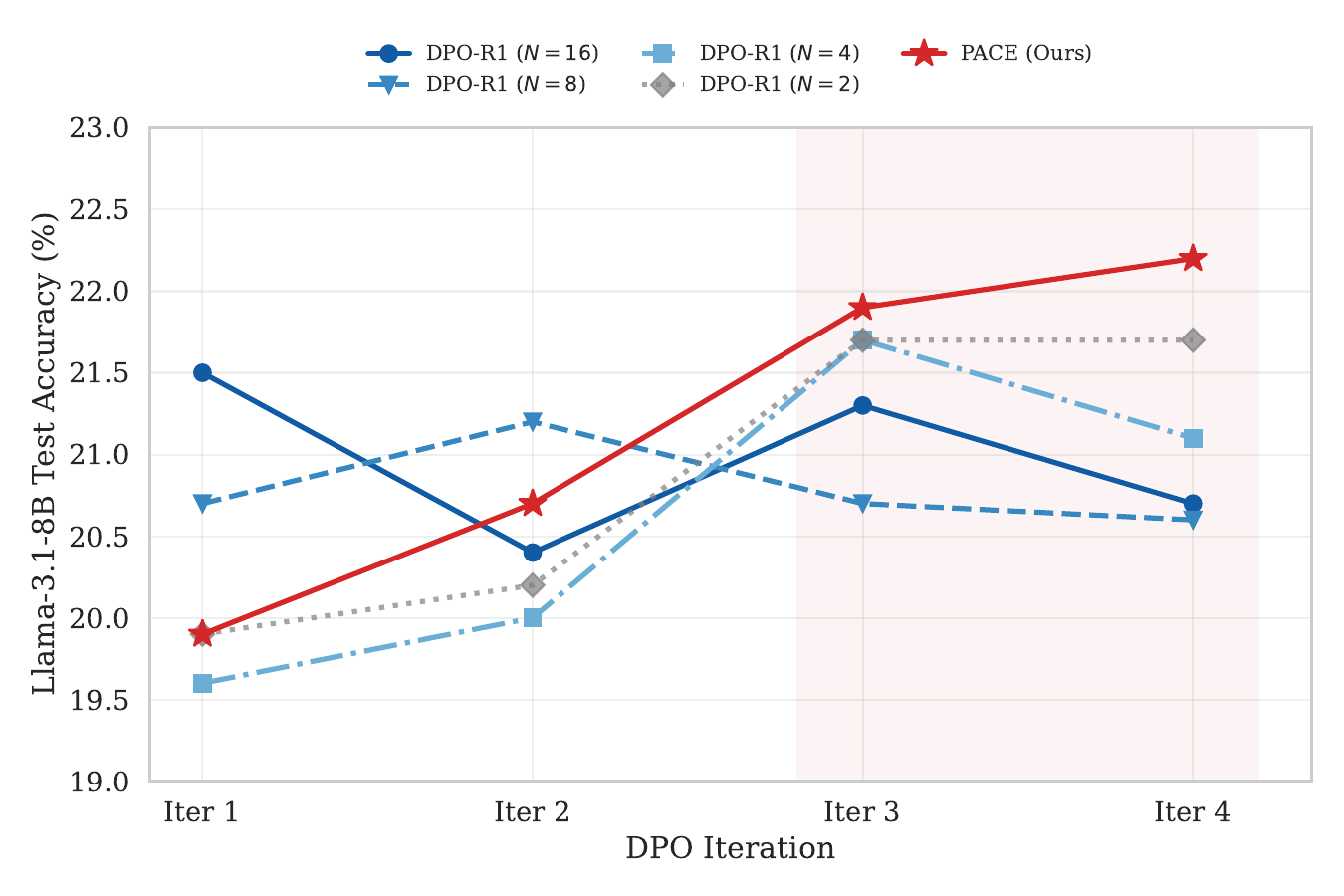}
    \caption{The Dynamics of Iterative Alignment.}
\label{fig:iterative_scaling_dynamics}
\end{figure}
Figure~\ref{fig:iterative_scaling_dynamics} reveals a fundamental quality-quantity trade-off in iterative alignment. DPO-R1 (High, $N=16$) exhibits classic overfitting: strong initial performance (21.5\% at Iter 1) degrades to 20.7\% by Iter 4. We attribute this to the \emph{unsupervised nature of rejection sampling}: as the model improves, $N=16$ retrieves increasing spurious positives (hallucinated reasoning with correct answers) from the distribution tail, poisoning the reward signal. 
In contrast, PACE demonstrates robust scaling: starting lower (19.8\% at Iter 1) due to the Consistency Filter rejecting low-confidence lucky guesses, it achieves \emph{monotonic improvement} to 22.2\% at Iter 4. This confirms that for iterative learning, \emph{signal purity} (ensured by refinement and filtering) outweighs \emph{exploration volume}.

\paragraph{Distribution Shift} To bridge the gap between theoretical lower bounds and global policy behavior, we measured the empirical KL divergence ($\mathbb{D}_{KL}(\pi_{\theta} \| \pi_{ref})$) on the MATH500 dataset for Llama-3.1-8B. Table~\ref{tab:kl} validates Proposition~\ref{prop:kl}: As the sampling budget $N$ increases, the policy drifts significantly far from the reference with diminishing returns in accuracy.  
DPO-R1 ($N=16$) exhibits $2.3\times$ the empirical KL divergence of PACE (0.0071 vs 0.0029), confirming that high-$N$ sampling forces updates toward outlier trajectories.


\begin{table}[t]
\centering
\small
\begin{tabular}{lcc}
\toprule
\textbf{Method} & $N$ & $D_{\mathrm{KL}}(\pi_\theta \| \pi_{\text{ref}})$ \\
\midrule
PACE & 2$\sim$3 & 0.0029 \\
DPO-R1 & 2 & 0.0031 \\
DPO-R1 & 4 & 0.0045 \\
DPO-R1 & 8 & 0.0058 \\
DPO-R1 & 16 & 0.0071 \\
\bottomrule
\end{tabular}
\caption{Empirical KL divergence from reference policy.}\label{tab:kl}
\end{table}

\paragraph{Semantic Topology} 
Instead of analyzing gradients directly (which is noisy), we analyze the geometric relationship between the winner and loser responses as shown in Figure~\ref{fig:similarity}. We randomly sampled 1,000 pairs from each dataset (PACE refinement and BoN) and computed the Cosine Similarity between the embeddings of the winner and loser trajectories
using a pre-trained sentence encoder. 
PACE pairs concentrate at high cosine similarity ($\mu \approx 0.95$), whereas DPO-R1 ($N=16$) spreads across lower values. 
High similarity forces the model to discriminate via \emph{fine-grained logical differences} rather than surface heuristics.

\begin{figure}
    \centering
\includegraphics[width=0.8\linewidth]{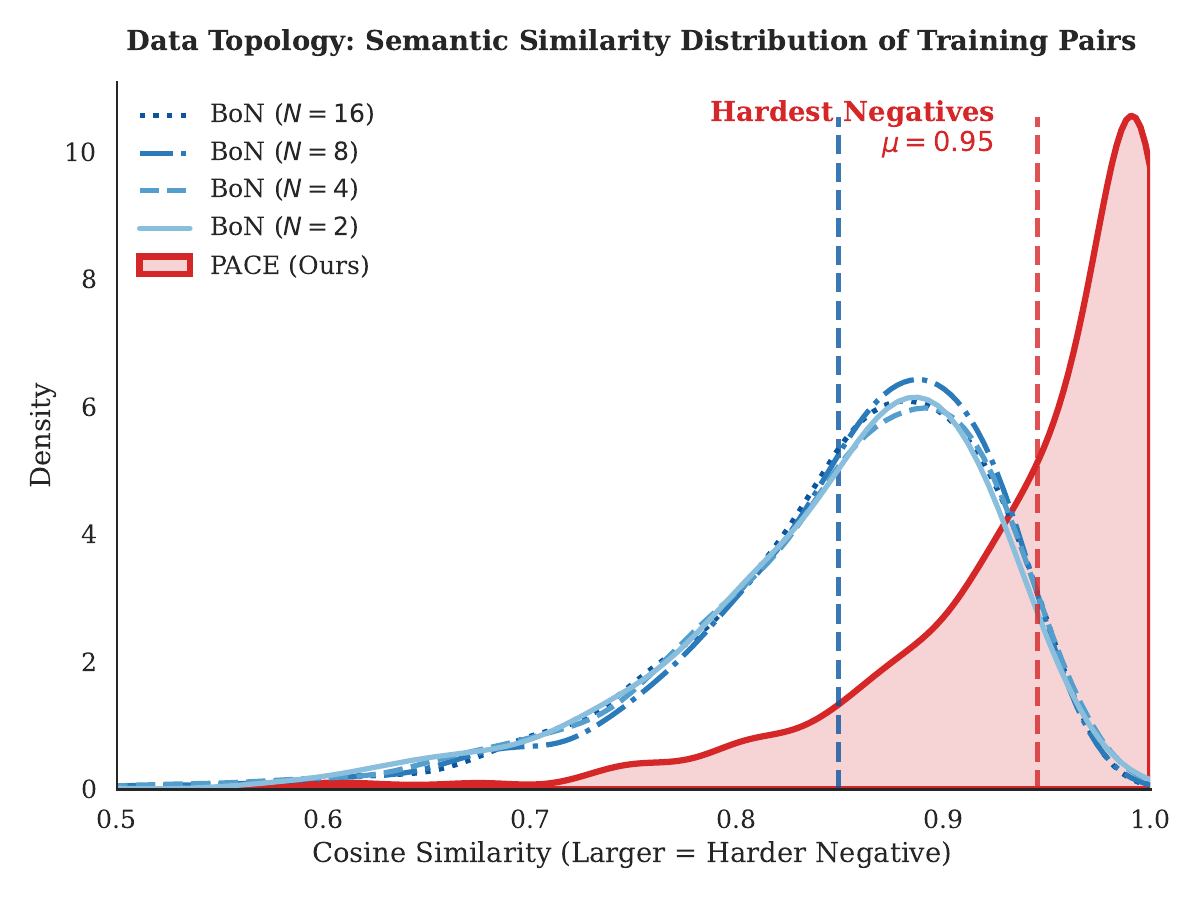}
    \caption{Semantic similarity distribution of training pairs. PACE synthesizes Hard Negatives (high similarity); BoN produces Easy Negatives (low similarity).} 
\label{fig:similarity}
\end{figure}
\subsection{Robustness to Label Noise}
\begin{table*}[t]
\centering
\small
\begin{tabular}{@{}lcccccccc@{}}
\toprule
 & \multicolumn{4}{c}{\textbf{Accuracy (Iter 1-4) $\uparrow$}} & \multicolumn{4}{c}{\textbf{Noise Diagnostics (Snapshots)}} \\ \cmidrule(lr){2-5} \cmidrule(lr){6-9} 
\textbf{Method} & \textbf{It. 1} & \textbf{It. 2} & \textbf{It. 3} & \textbf{It. 4} & \textbf{Yield (Clean)} & \textbf{Yield (Noisy)} & \textbf{Drop \%} & \textbf{NHR $\downarrow$} \\ \midrule
DPO-R1 ($N=2$) (Baseline) &\textbf{20.0}& 19.1 & 16.8 & 12.7 & 4,221 & 3,722 & 11.8\% & \textbf{4.3\%} \\
DPO-R1 ($N=16$) (Scaling) & 19.5 & \textbf{20.3}& \textbf{20.8} & 19.1 & 15,719 & 14,823 & 5.7\% & 28.1\% \\
\rowcolor[HTML]{EFEFEF} 
\textbf{PACE (Ours)} & 18.3 & 19.1 & 20.1 & \textbf{20.4} & 6,159 & 5,317 & 13.7\% & 4.9\% \\ \bottomrule
\end{tabular}
\caption{Iterative Performance and Noise Robustness Analysis on Llama-3.1-8B (20\% Label Noise). The left panel shows the accuracy evolution across 4 iterations. The right panel provides a diagnostic snapshot of the training buffer: \textbf{Yield} refers to the number of training pairs, and \textbf{Noise Hit Rate (NHR)} measures the percentage of samples where the model successfully mined a trajectory matching the corrupted ``0'' label. 
}
\label{tab:noise_evolution}
\end{table*}

To validate Proposition~\ref{prop:fp}, we corrupt 20\% of ground-truth labels by replacing the final answer with ``0'', creating a Small-Digit Trap that tests whether the method can exploit noisy but easily matchable labels (Table~\ref{tab:noise_evolution}).
DPO-R1 (High) achieves higher initial accuracy but suffers from 28.1\% Noise Hit Rate (NHR): the percentage of samples where the model successfully mines trajectories matching corrupted labels. This confirms that high-$N$ exploration acts as an \emph{adversarial miner}, synthesizing hallucinations to match erroneous signals. Despite initial buffering by correct samples, long-term policy degradation is inevitable.
DPO-R1 ($N=2$) exhibits low NHR (4.3\%) but severe signal starvation (Yield Drop 11.8\%), stalling at 12.7\% accuracy: the cost of insufficient exploration.
PACE resolves this trade-off: low NHR (4.9\%) comparable to $N=2$, yet 42.8\% more valid signals recovered via hindsight refinement. The 13.7\% yield drop is a \emph{deliberate logical purge}: the Consistency Filter rejects paths irreconcilable with corrupted labels. This results in the highest iterative performance, proving that \emph{synthesized purity outperforms mined volume} under noise.

\subsection{Scaling and Generalization}\label{sec:analysis:efficiency}
\begin{table}[t]
\centering
\scalebox{0.7}{
\begin{tabular}{@{}lcc@{}}
\toprule
\textbf{Method} & \textbf{Exploration ($N$)} & \textbf{Accuracy (\%)} \\ \midrule
DPO-R1 & 2 & 21.3 \\
DPO-R1 & 4 & 21.3 \\
DPO-R1 & 8 & 21.4 \\
DPO-R1 & 16 & 21.4 (\textcolor{blue}{$\uparrow$ 0.1}) \\ \midrule
\rowcolor[HTML]{F2F2F2} 
\textbf{PACE (Ours)} & \textbf{2} & \textbf{21.9} (\textcolor{red}{$\uparrow$ 0.6}) \\ \bottomrule
\end{tabular}
}
\caption{Iso-Data Efficiency: all methods trained on 5,000 pairs (Llama-3.1-8B).}
\label{tab:iso_data}
\end{table}
\paragraph{Iso-Data Analysis.} Fixing the training budget to 5,000 pairs regardless of $N$ (Table~\ref{tab:iso_data}), DPO-R1 exhibits non-monotonic behavior: $N=4$ outperforms $N=2$, but $N=16$ regresses. This validates adversarial mining, without logic-preserving constraints, larger pools capture deceptive trajectories that misalign reasoning with answers.
PACE achieves peak accuracy, outperforming all DPO-R1 variants within the same budget. This provides definitive evidence that \emph{construction quality (Synthesis) dominates exploration breadth (Mining)}.

\begin{table}[t]
\centering
\small
\setlength{\tabcolsep}{3.5pt}
\begin{tabular}{lcccccc}
\toprule
& \multicolumn{4}{c}{\textbf{Rollout}} & \textbf{PACE} \\
\cmidrule(lr){2-5} \cmidrule(lr){6-6}
\textbf{Configuration} & $N$=2 & $N$=4 & $N$=8 & $N$=16 & (Ours) \\
\midrule
GRPO (Think) & 33.4 & 32.6 & \textbf{34.8} & 34.3 & 34.4 \\
DPO (No-Think)& 33.7 & 34.5 & 35.5 & \textbf{36.3} & 34.1 \\
DPO (Think) & 37.6 & 40.4 & \textbf{40.6} & 39.9 & \textbf{40.6} \\
\bottomrule
\end{tabular}
\caption{PACE vs Online RL (GRPO/Think) and DPO (Think/No-Think) across multiple training configurations on Qwen3-0.6B. We show average accuracy of 5 benchmarks (\%) under varying sampling budgets. }\label{tab:qwen06b_results}
\end{table}

\paragraph{Model Scalability} 
Table~\ref{tab:qwen06b_results} validates PACE's cross-paradigm stability on Qwen3-0.6B across GRPO (Think), DPO (No-Think), and DPO (Think) configurations. The detailed experimental setup is described in Appendix \ref{app:setup}.
PACE consistently approaches or matches the peak accuracy of high-$N$ baselines, achieving 34.4\% vs. GRPO's best 34.8\% ($N=8$), and 40.6\% matching DPO (Think)'s best 40.6\% ($N=8$), while operating at a fixed exploration budget of  $N\approx2$. Notably, baselines suffers from performance regression at $N=16$ in both GRPO (34.3\%) and DPO-Think (39.9\%), whereas PACE avoids this degradation by synthesizing high-fidelity proximal pairs rather than mining distribution tails. These results confirm that corrective synthesis generalizes beyond the DPO setting, delivering comparable alignment efficiency under diverse training regimes.

\paragraph{Cross-Domain Preservation} Table~\ref{tab:ood_generalization} tests whether mathematical alignment sacrifices general capabilities. DPO-R1 ($N=8$) degrades on BBH~\cite{bbh}, MMLU~\cite{MMLU}, and TruthfulQA~\cite{lin-etal-2022-truthfulqa} versus $N=2$: classic alignment tax from adversarial mining. PACE preserves distributional integrity: its average OOD score outperforms $N=8$ and demonstrates positive transfer on HumanEval (+1.4\%), confirming that PACE enhances reasoning without compromising foundational intelligence.

\begin{table}[t]
\centering
\scalebox{0.65}{
\begin{tabular}{@{}lccccc@{}}
\toprule
\textbf{Method} & \textbf{BBH} & \textbf{Code} & \textbf{MMLU} & \textbf{TruthfulQA} & \textbf{Avg.} \\ \midrule
Llama3.1-8B & 56.2 & 51.7 & 68.6 & 37.7 & 53.5 \\ \midrule
DPO-R1 ($N = 2$) & 58.1 & 53.9 & 68.4 & 38.8 & 54.8 \\
DPO-R1 ($N = 8$) & 57.0 ({$\downarrow$}) & 52.7 ({$\downarrow$}) & 68.4 ({--}) & 37.9 ({$\downarrow$}) & 54.0 \\ \midrule
\rowcolor[HTML]{F2F2F2} 
\textbf{Ours} & \textbf{59.1} (\textcolor{red}{$\uparrow$}) & \textbf{55.3} (\textcolor{red}{$\uparrow$}) & \textbf{68.3} ($\downarrow$) & \textbf{38.2} (\textcolor{red}{$\uparrow$}) & \textbf{55.2} \\ \bottomrule
\end{tabular}
}\caption{\textbf{Out-of-Domain Generalization.}  While high-$N$ exploration often leads to alignment tax on OOD tasks, PACE maintains distributional integrity.}
\label{tab:ood_generalization}
\end{table}

\section{Conclusion}
We revisited the widespread assumption that scaling inference compute is necessary for alignment quality. 
Our results reveal that aggressive Best-of-N sampling incurs hidden costs: it degrades iterative stability, inflates distribution shift, and amplifies vulnerability to noisy rewards. 
We introduced PACE, which replaces brute-force mining with hindsight correction to synthesize high-quality preference pairs from minimal exploration. Across mathematical reasoning benchmarks, PACE matches or exceeds the performance of high-budget baselines while reducing computational overhead by over $5\times$, and maintains robustness even when 20\% of labels are corrupted. 
These findings suggest a practical shift from exhaustive answer mining to targeted signal synthesis, offering a more accessible path for training reasoning models at scale. 

\section*{Limitations}
There are several limitations to our work. First, due to training resource constraints, our algorithm primarily focuses on offline RL methods such as iterative DPO. We only conduct experiments on mainstream online approaches like GRPO
for small-scale models (Qwen3 0.6B). Online RL methods, such as PPO or GRPO, require continuous environment rollouts for each training step. This persistent interaction with the environment substantially increases computational costs, which is orthogonal to PACE’s goal of minimizing data-generation overhead.
Given that iterative DPO is a low-resource setting for large-scale post-training, we prioritize validating PACE in this practically relevant regime. 
Another limitation is that we have only conducted experiments on test sets of varying difficulty levels within the field of mathematical reasoning, and other domains such as code and general domains remain unexplored. Future work will extend this corrective paradigm to open-ended domains beyond mathematics.

\section*{Impact Statement}
As LLM post-training scales, the computational cost of data generation becomes a bottleneck. PACE offers a sustainable path forward, demonstrating that smarter data synthesis can replace larger data mining. This has significant implications for democratizing the training of reasoning models, particularly in domains where verified data is scarce or verifiers are imperfect. While our current work focuses on mathematical reasoning with ground-truth labels, future directions include extending the corrective mechanism to open-ended generation tasks using AI-based feedback (e.g., LLM-as-a-Judge).




\bibliography{example_paper}

\newpage
\appendix
\section{The Risks of Best-of-N}\label{sec: theory}
We analyze the learning dynamics of Iterative DPO under the BoN sampling strategy. While scaling inference compute (increasing $N$) is beneficial for test-time performance, we argue that naively using it for training data generation introduces two fundamental pathologies: False Positive Amplification (susceptibility to reward noise) and Distributional Shift (violating the trust region).

Let $\pi_{\text{ref}}(y|x)$ be the reference policy (i.e., the policy from the previous iteration), and $\mathcal{V}(y, x)$ be a verifier (or reward function) approximating the ground truth $R(y, x)$. In standard DPO-R1, for a prompt $x$, we sample $\mathcal{Y}_N = \{y_i\}_{i=1}^N \sim \pi_{\text{ref}}(\cdot|x)$ to construct preference pairs $(y_w, y_l)$ via:
\begin{equation}
    y_w \in \mathcal{Y}_W =  \arg{\max_{y \in \mathcal{Y}_N}} \mathcal{V}(y, x), \quad y_l \in \mathcal{Y}_N \setminus \mathcal{Y}_W.
\end{equation}
The policy is then updated to minimize the DPO loss $\mathcal{L}_{\text{DPO}}$.

\subsection{False Positive Amplification}\label{False Positive}
In real-world scenarios, the verifier $\mathcal{V}$ is rarely a perfect oracle. It may contain noise (e.g., labeling errors) or structural biases (e.g., reward hacking spots). We demonstrate that increasing $N$ turns the sampling process into an adversarial miner against the verifier.
Consider a challenging problem $x$. We partition the solution space into the true solution set $\mathcal{T} = \{y \mid R(y)=1\}$ and the hallucinated set $\mathcal{H} = \{y \mid R(y)=0, \mathcal{V}(y)=1\}$. The generation process is modeled as follows: the model produces a valid reasoning path with probability $\alpha = P(y \in \mathcal{T} \mid x)$. If the reasoning fails, the model may still trigger a verifier flaw with a conditional intrinsic defect rate $\epsilon = P(y \in \mathcal{H} \mid y \notin \mathcal{T})$. Consequently, the verifier pass probability $\rho = P(\mathcal{V}(y)=1)$ is $\rho = \alpha + (1 - \alpha)\epsilon$.
Since the verifier cannot distinguish between true and spurious solutions, the probability that a selected positive sample is a false positive (FP) is given by:
\begin{align}\label{eq: FP_prior} 
\Psi(\alpha)=P(y \in \mathcal{H} \mid \mathcal{V}(y)=1) = \tfrac{(1 - \alpha)\epsilon}{\alpha + (1 - \alpha)\epsilon}.
\end{align}
From a prior perspective, the FP probability \eqref{eq: FP_prior} is formally independent of $N$. However, since DPO training necessitates the construction of positive-negative pairs, increasing $N$ effectively induces a non-random drift in the distribution of training samples: as multiple attempts raise the probability of passing the verifier, it introduces challenging tasks that would rarely yield a correct solution under smaller $N$. These newly incorporated samples typically exhibit an extremely low empirical pass rate $\hat{\rho}$. For instance, when the sample budget increases from $N-1$ to $N$, the marginal observed pass rate for a newly activated sample is merely $1/N$.
Based on this observation, we employ Maximum Likelihood Estimation (MLE) to perform a posterior correction on \eqref{eq: FP_prior}. 
Specifically, the MLE of the model’s intrinsic reasoning capability $\hat{\alpha}$ is given by (see Appendix \ref{Appendix: theory_FP} for details):
\begin{align}\label{eq: MLE_alpha}
\hat{\alpha} 
= \Pi_{[0, 1]} \left( \tfrac{\hat{\rho} - \epsilon}{1 - \epsilon} \right) 
= \max\left(0, \tfrac{\hat{\rho} - \epsilon}{1 - \epsilon}\right),
\end{align}
where $\Pi_{[0,1]}$ denotes the projection onto the interval $[0,1]$. 
Combining the monotonicity of \eqref{eq: MLE_alpha} and \eqref{eq: FP_prior}, it follows that the FP probability decreases monotonically with the $\hat\rho$. Each time the sample budget increases by 1, the empirical pass rate $\hat\rho$ of newly added samples successively decreases ($\sim 1/N$).
Consequently, while increasing $N$ expands the scale of the training set, it also systematically introduces marginal samples with a heightened risk of being false positives, thereby exposing the model to significant reward pitfalls. In contrast, PACE's constrained exploration minimizes the ``surface area" exposed to verifier noise, yielding a robust training signal even under imperfect supervision.

\subsection{Optimization Instability via Distributional Shift}\label{BoN_KL}



We previously discussed how increasing $N$ introduces more false positives samples. However, even with a perfect validator \(\mathcal{V}\) (i.e., \(\epsilon = 0\)), a large $N$ still causes deviation from the trust region. 
The core premise of stable RL methods, such as PPO, is to maintain the trust region, ensuring that the target distribution does not excessively diverge from the current policy. We further demonstrate that the BoN method with a large \(N\) implicitly violates this principle.

To improve model performance, we define a target distribution $\pi_{\text{new}}(y|x)$ such that the probability of producing the correct answer for a given sample is at least $\eta>\alpha$. During the fine-tuning or alignment process, the objective is for the trained model distribution to converge toward $\pi_{\text{new}}$.
However, achieving this target distribution inevitably entails a deviation from the reference distribution. If there is a significant gap between the original success probability $\alpha$ and the target probability $\eta$, the KL divergence between the two distributions must possess a non-trivial lower bound. According to the Data Processing Inequality, the minimum cost of this distributional shift can be expressed as:
\begin{align}\label{KL_lower}
\mathbb{D}_{\mathrm{KL}}(\pi_{\text{new}} \| \pi_{\text{ref}}) &\geq \eta \log \tfrac{\eta}{\alpha} + (1-\eta) \log \tfrac{1-\eta}{1-\alpha}.
\end{align}
As previously discussed, the introduction of new samples as $N$ increases typically corresponds to a lower observed pass rate, which tends to result in a lower model accuracy $\alpha$, thereby leading to greater distributional deviation. Under the assumption of a perfect verifier ($\epsilon = 0$), according to \eqref{eq: MLE_alpha}, the posterior estimate of the model accuracy is equivalent to the sample pass rate $\hat\rho$. For an incremental budget increase from $N-1$ to $N$, considering a newly introduced sample (where $\hat\alpha = 1/N$), the lower bound (LB) in \eqref{KL_lower} evolves as:
\begin{align}
\text{LB Estimation} \sim \eta \log N - H(\eta) + \Omega(1/N),
\end{align}
where $H(\eta)$ 
denotes the binary entropy function. The detailed derivation is provided in Appendix \ref{Appendix: theory_KL}.

When training on datasets constructed via $N=16$ (or $64$), we force the model to update towards outlier trajectories.
This large distribution shift ($\Delta \pi$) often exceeds the stable learning rate capacity of the optimizer, leading to policy collapse or overfitting to specific, narrow reasoning paths, as observed empirically in ~\citet{gao2023scaling} and ~\citet{lambert2023alignment}. By constraining $N=2$, PACE implicitly acts as a regularizer, keeping the target distribution within the proximal trust region of the current policy.

\subsection{Information Gain via Hard Negatives}
Finally, we consider the quality of the learning signal. The gradient of the DPO loss is scaled by $\sigma(\hat{r}_\theta(y_l) - \hat{r}_\theta(y_w))$, where $\hat{r}_\theta$ is the implicit reward.
In DPO-R1 ($N=16$), $y_w$ is the ``best" of many, while $y_l$ is often a random ``loser". 
Standard DPO-R1 constructs Easy Pairs where $y_w$ and $y_l$ are semantically distant, leading to optimization shortcuts. 
In contrast, PACE constructs Proximal Pairs that lie close to the decision boundary. Theoretically, optimizing against these Hard Negatives maximizes the information gain per update, forcing the model to refine specific reasoning steps rather than memorizing surface-level heuristics.
\section{Detailed Derivation of False Positive Amplification}
\label{Appendix: theory_FP}

This appendix provides a formal derivation to demonstrate that, under conditions of verifier inaccuracy, increasing the sampling budget $N$ leads to a higher FP rate in newly added training samples. This section serves as a supplement and extension to the theoretical analysis in Section \ref{False Positive}. To maintain the self-sufficiency and completeness of the derivation, some necessary background information is retained, which may partially overlap with the main text.

\subsection{Problem Formulation}
Consider a (typically challenging) problem $x$. Let $y$ be a candidate solution generated by the model. Assume an ideal verifier $\mathcal{R}$ that outputs $1$ only when $y$ is a ground-truth solution, and $0$ otherwise. In practice, however, we employ a verifier $\mathcal{V}$ to approximate the ground truth, which has a certain probability of incorrectly accepting erroneous solutions as correct.

Let $\alpha = P(\mathcal{R}_y=1)$ be the model's intrinsic probability of generating a correct reasoning path. Let $\epsilon = P(\mathcal{V}_y=1\mid \mathcal{R}_y=0)$ be the conditional defect rate (noise floor) of the verifier. Assuming the verifier has perfect recall for correct answers, i.e., $P(\mathcal{V}_y=1 \mid \mathcal{R}_y=1) = 1$, the total pass rate $\rho$ by the verifier is:
\begin{equation}
\begin{aligned}\label{eq:total_rho}
\rho &= P(\mathcal{V}_y=1) \\
     &= P(\mathcal{V}_y=1 \mid \mathcal{R}_y = 1)P(\mathcal{R}_y = 1) \\
     &\quad\quad + P(\mathcal{V}_y=1 \mid \mathcal{R}_y =0)P(\mathcal{R}_y =0) \\
     &= \alpha + (1 - \alpha)\epsilon = (1-\epsilon)\alpha+\epsilon \in [0,1].
\end{aligned}
\end{equation}
Notably, to simplify the analysis and extract core trends, we assume $\epsilon$ to be a constant independent of specific instances, representing the verifier’s intrinsic defect rate. To be sure, in real-world scenarios, the misjudgment probability of a verifier is influenced by various factors such as problem difficulty or answer-space complexity, just as the random pass rate for multiple-choice questions is typically higher than that for open-ended ones. However, the present theoretical analysis aims to marginalize these individual variances to focus on the evolutionary impact of the sampling budget $N$ on the distribution of valid samples within an idealized and controlled environment. This assumption is analogous to a student adopting a uniform random guessing strategy for all questions that fall beyond their knowledge base. 

Furthermore, we define the defect rate as a conditional probability given an incorrect solution, rather than a global probability, to ensure logical self-consistency. This formulation effectively avoids the physically implausible scenario where the total pass rate exceeds 1 when the model’s accuracy $\alpha$ is high (the overall pass rate would then become $\alpha+\epsilon$ in this case). Overall, the current assumptions balance modeling simplicity with mathematical rigor.

\subsection{False Positive Probability}
The probability that a sample is a false positive given it passed the verifier is denoted by $\Psi(\alpha)$. By Bayes' Theorem:
\begin{equation}\label{eq: Psi}
\begin{aligned}
\Psi(\alpha) &= P(\mathcal{R}_y=0 \mid \mathcal{V}_y=1) \\
            &= \tfrac{P(\mathcal{V}_y=1 \mid \mathcal{R}_y=0) P(\mathcal{R}_y=0)}{P(\mathcal{V}_y=1)} \\
            &= \tfrac{(1 - \alpha)\epsilon}{\alpha + (1 - \alpha)\epsilon}.
\end{aligned}
\end{equation}
Taking the derivative with respect to $\alpha$:
\begin{equation}
    \tfrac{\mathrm{d}\Psi}{\mathrm{d}\alpha} = \tfrac{-\epsilon}{(\alpha + (1 - \alpha)\epsilon)^2} < 0.
\end{equation}
This confirms that $\Psi(\alpha)$ is monotonically decreasing with $\alpha$. Lower intrinsic reasoning capability $\alpha$ leads to a higher risk of verifier hacking.

\subsection{The Impact of Large Sampling Budget on Training Samples}
As mentioned earlier, from a prior probability perspective, the theoretical form of FP probability $\Psi(\alpha)$ is determined solely by the model’s inherent capability $\alpha$ and the verifier’s conditional defect rate $\epsilon$; it does not depend formally on the sample budget $N$. However, in the practical implementation of constructing DPO positive‑negative sample pairs, the sample distribution undergoes significant shifts due to the selection mechanism.

For highly challenging problems (i.e., samples with a small $\alpha$), the single‑generation pass rate $\rho$ is extremely low. As $N$ increases, the model’s “trial space” expands substantially. According to the probability formula 
\begin{align}
    \rho_N = 1 - (1-\rho)^N,
\end{align} the absolute probability that such a sample yields at least one “verified‑passing” solution (regardless of its true correctness) and is therefore retained as a positive sample rises accordingly. This implies that large‑scale sampling effectively acts as a “probability amplifier,” allowing solutions that would have been filtered out under a small $N$ because of their very low empirical pass rate to gain a higher survival probability in the final training dataset. This “statistical bias” introduced by deeper sampling necessitates a posterior adjustment to the theoretical formula.

It should be noted that we only apply posterior correction to $\alpha$. This is because in DPO training, we have empirical observations of the pass rate, enabling estimation of $\alpha$; however, for $\epsilon$, we lack direct observational data, and therefore it is still assumed to be fixed here.

\subsection{MLE of the Intrinsic Capability of Model}\label{Capability}

For the sake of analytical simplicity, the main text employs MLE for posterior correction. We replicate this approach here to maintain consistency. From \eqref{eq:total_rho}, we have the relation $\rho = \alpha(1 - \epsilon) + \epsilon$. 
The MLE for the observed pass rate is $\hat{\rho}$. Substituting this into the relation:
\begin{equation}
    \hat{\rho} = \hat{\alpha}(1 - \epsilon) + \epsilon \implies \hat{\alpha} = \tfrac{\hat{\rho} - \epsilon}{1 - \epsilon}.
\end{equation}
Accounting for the boundary constraint $\alpha \in [0, 1]$, the projected MLE is:
\begin{equation}\label{eq: hatalpha}
    \hat{\alpha} = \Pi_{[0, 1]} \left( \tfrac{\hat{\rho} - \epsilon}{1 - \epsilon} \right) = \max\left(0, \tfrac{\hat{\rho} - \epsilon}{1 - \epsilon}\right).
\end{equation}
Combining the monotonicity of \eqref{eq: Psi} and \eqref{eq: hatalpha}, it follows that the FP probability decreases monotonically with the $\hat\rho$.

\subsection{the Cost of Deep Sampling and Asymptotic Behavior}
As mentioned earlier, when $N$ increases, newly added samples typically exhibit a lower empirical pass rate $\hat{\rho}$. Given the monotonic decreasing relationship between the false positive rate and $\hat{\rho}$, this implies that the false positive rate of the newly added samples will correspondingly rise. 

Specifically, consider ``marginal tasks" where a verified solution emerges for the first time only as the sampling budget increases from $N-1$ to $N$. For such tasks, the empirical pass rate $\hat{\rho} = 1/N$ decreases monotonically with $N$. This indicates a clear trend of marginal quality degradation: as the sampling budget expands, the empirical pass rate of newly incorporated samples continues to decline, leading to a corresponding escalation in the risk of false positives.

Furthermore, as $N \to \infty$, we observe:
\begin{equation}
\lim_{N \to \infty} \hat{\alpha} = \max\left(0, \tfrac{\frac{1}{N} - \epsilon}{1 - \epsilon}\right) = 0, \quad \text{for any } \epsilon > 0.
\end{equation}
Substituting this into \eqref{eq: Psi}, the FP probability for these newly recovered samples approaches:
\begin{equation}
\lim_{\hat{\alpha} \to 0} \Psi(\hat{\alpha}) = \tfrac{(1-0)\epsilon}{0 + (1-0)\epsilon} = 1.
\end{equation}
This derivation formally captures the False Positive Amplification effect: while increasing $N$ recovers more training data from challenging problems, the marginal gain consists almost entirely of spurious solutions. In the context of DPO, treating these high-FP samples as positive reinforcements directs the model to optimize for verifier loopholes rather than genuine reasoning capabilities.


\subsection{Bayesian Posterior Distribution}
While the MLE approach provides a straightforward point estimation, it ignores the inherent uncertainty in $\alpha$ given small sample sizes. A more rigorous treatment considers $\alpha$ as a random variable. We assume a non-informative prior $\alpha \sim \mathcal{U}[0,1]$ (i.e., a uniform distribution, equivalent to $\text{Beta}(1,1)$). Given the observation $D = \{k, N\}$ (where $k$ represents the observed empirical pass rate of the verifier), the posterior probability density function (PDF) of the latent capability $\alpha$ is proportional to the likelihood $L(\alpha\mid D)$:
\begin{equation}\label{eq:likelihood_post}
\begin{aligned}
L(\alpha \mid D) &\propto [\rho(\alpha)]^k [1-\rho(\alpha)]^{N-k}, \\
p(\alpha \mid D) &= \frac{[\rho(\alpha)]^k [1-\rho(\alpha)]^{N-k}}{\int_0^1 [\rho(\alpha)]^k [1-\rho(\alpha)]^{N-k} \mathrm{d}\alpha}.
\end{aligned}
\end{equation}
Building on this, we calculate the expectation of the false positive rate under this posterior distribution:
\begin{equation}\bar{\Psi}(D) = \mathbb{E}_{\alpha \mid D} [\Psi(\alpha)] = \int_{0}^{1} \Psi(\alpha) \, p(\alpha \mid D) \, \mathrm{d}\alpha.\end{equation}
As before, we continue to focus on the false positive rate of the additional samples when the sampling budget increases by one, and arrive at the following analogous conclusion:
\begin{proposition}[Monotonicity of Marginal False Positives]
Consider a marginal task where a verified solution is obtained for the first time only at the $N$-th attempt (i.e., $D_N = \{1, N\}$). The expected false positive (FP) rate of such a sample, denoted by $\bar{\Psi}(N) = \mathbb{E}_{\alpha \mid D_N} [\Psi(\alpha)]$, is strictly monotonically increasing with respect to the sampling budget $N$. That is:
\begin{equation}
N_1 > N_2 \implies \mathbb{E}_{\alpha \mid D_{N_1}} [\Psi(\alpha)] > \mathbb{E}_{\alpha \mid D_{N_2}} [\Psi(\alpha)].
\end{equation}
\end{proposition}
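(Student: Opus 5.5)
The plan is to view the family of posteriors $\{p(\cdot\mid D_N)\}_N$ as an exponential tilt of one another, and then apply a covariance (Chebyshev-type) inequality for two monotone functions. Throughout I assume $0<\epsilon<1$; for $\epsilon=0$ we have $\Psi\equiv 0$ and the claim degenerates.

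\textbf{Step 1 (posterior).} With $D_N=\{1,N\}$, \eqref{eq:likelihood_post} gives $p_N(\alpha)\propto \rho(\alpha)\,[1-\rho(\alpha)]^{N-1}$ on $[0,1]$, where $\rho(\alpha)=\epsilon+(1-\epsilon)\alpha$. This density is bounded and positive on $(0,1)$, so all integrals below are finite and nonzero.

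\textbf{Step 2 (tilt representation).} For $N_1>N_2$, write $m=N_1-N_2\ge 1$ and $g(\alpha)=[1-\rho(\alpha)]^{m}$. Then $p_{N_1}(\alpha)=g(\alpha)\,p_{N_2}(\alpha)/\mathbb{E}_{N_2}[g]$, and hence
\[
\mathbb{E}_{N_1}[\Psi]=\frac{\mathbb{E}_{N_2}[\Psi g]}{\mathbb{E}_{N_2}[g]}.
\]
The claim $\mathbb{E}_{N_1}[\Psi]>\mathbb{E}_{N_2}[\Psi]$ is therefore equivalent to
\[
\operatorname{Cov}_{N_2}(\Psi,g)>0.
\]

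\textbf{Step 3 (monotonicity and the covariance inequality).} Both functions are strictly decreasing in $\alpha$:
\begin{itemize}
\item $\Psi$ is strictly decreasing by the derivative computed after \eqref{eq: Psi}, using $\epsilon>0$.
\item $g$ is strictly decreasing because $\rho$ is strictly increasing (as $1-\epsilon>0$) and $m\ge1$.
\end{itemize}
I would then use the symmetrization identity
\[
2\operatorname{Cov}(\Psi,g)=\iint \bigl(\Psi(\alpha)-\Psi(\alpha')\bigr)\bigl(g(\alpha)-g(\alpha')\bigr)\,p_{N_2}(\alpha)\,p_{N_2}(\alpha')\,\mathrm{d}\alpha\,\mathrm{d}\alpha'.
\]
The integrand is strictly positive whenever $\alpha\neq\alpha'$, and $p_{N_2}\otimes p_{N_2}$ is absolutely continuous with positive density on $(0,1)^2$. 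The covariance is therefore strictly positive, which proves strict monotonicity.

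\textbf{Cross-check via a closed form.} Using $1-\alpha=(1-\rho)/(1-\epsilon)$, one gets $\Psi=\frac{\epsilon(1-\rho)}{(1-\epsilon)\rho}$. The factor $\rho$ then cancels against the likelihood, and substituting $u=\rho$ (which ranges over $[\epsilon,1]$) gives
\[
\bar\Psi(N)=\frac{\epsilon}{1-\epsilon}\cdot\frac{\int_\epsilon^1(1-u)^N\,\mathrm{d}u}{\int_\epsilon^1 u(1-u)^{N-1}\,\mathrm{d}u}.
\]
Both integrals are elementary, so this yields an explicit rational-type expression in $(1-\epsilon)^N$ and $N$. Its monotonicity could be checked directly, but I expect that to be messier than the covariance argument.

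\textbf{Main obstacle.} The only delicate point is strictness rather than weak inequality. It is handled by the strict monotonicity of both $\Psi$ and $g$, together with the posterior not being a point mass. I would verify this carefully at the endpoints, noting that $p_N$ vanishes at $\alpha=1$ only when $N\ge2$, which does not affect positivity on the open interval.
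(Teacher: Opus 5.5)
Your proof is correct, but it uses a different key lemma from the paper's. The paper quotes the chain ``monotone likelihood ratio $\Rightarrow$ first-order stochastic dominance $\Rightarrow$ strict ordering of $\mathbb{E}[u(\alpha)]$ for strictly increasing $u$'' as a known fact. It then checks, by a log-derivative computation written for general $(k,N)$, that $L(\alpha\mid D_{N_2})/L(\alpha\mid D_{N_1})=[1-\rho(\alpha)]^{-(N_1-N_2)}$ is increasing, and applies the chain with $u=-\Psi$. You rely on the same structural fact, since your tilt factor $g=(1-\rho)^m$ is exactly the reciprocal of that ratio. Instead of invoking stochastic dominance, however, you reduce the claim to $\mathrm{Cov}_{N_2}(\Psi,g)>0$ and prove it with Chebyshev's symmetrization identity.

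The paper's version is shorter because it outsources the decisive step to a cited result. Yours is self-contained and makes strictness transparent, since the integrand is positive off a null diagonal. It also states the hypothesis $0<\epsilon<1$, which the paper uses only tacitly through $\rho'=1-\epsilon>0$ and $\Psi'<0$.

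You also undersold your cross-check. With $c=1-\epsilon$, the two integrals are $c^{N+1}/(N+1)$ and $c^{N}/N-c^{N+1}/(N+1)$. The powers of $c$ cancel, giving
\[
\bar\Psi(N)=\frac{N\epsilon}{1+N\epsilon}.
\]
This proves strict monotonicity in one line, with no ordering or covariance argument. It also gives the exact rate at which $\bar\Psi(N)\to 1$, consistent with the paper's MLE limit $\Psi(\hat\alpha)\to 1$.
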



\begin{proof}To prove the above result, we first state the following conclusion: 
For any two observations $D_1 = \{k_1, N_1\}$ and $D_2 = \{k_2, N_2\}$, $D_1$ is said to possess the Monotone Likelihood Ratio Property (MLRP) with respect to $D_2$ if the likelihood ratio $L(\alpha \mid D_1) / L(\alpha \mid D_2)$ is strictly increasing in $\alpha$. The MLRP of $D_1$ over $D_2$ implies that the posterior $p(\alpha \mid D_1)$ First-Order Stochastically Dominates (FOSD) $p(\alpha \mid D_2)$, leading to the following result:
\begin{equation}
\mathbb{E}_{\alpha \mid D_1} [u(\alpha)] > \mathbb{E}_{\alpha \mid D_{2}} [u(\alpha)]\quad,
\end{equation}
for any strictly increasing function$~u$.
Therefore, it suffices to prove that when $k_1 = k_2 = 1$ and $N_1 > N_2$, $D_2$ has the MLRP with respect to $D_1$, and the conclusion can be obtained directly from the monotonic decreasing property of $\Psi$ with respect to $\alpha$ (by letting $u = -\Psi$).

For likelihood function $L(\alpha\mid D) = [\rho(\alpha)]^k [1-\rho(\alpha)]^{N-k}$, we perform the following calculation (taking the logarithm does not affect monotonicity):
\begin{equation}\label{eq:lambda}
\begin{aligned}
\Lambda(\alpha) 
&= \frac{L(\alpha \mid k_1, N_1)}{L(\alpha \mid k_2, N_2)} \\
&= \rho(\alpha)^{k_1 - k_2} \left[1 - \rho(\alpha)\right]^{(N_2 - k_2) - (N_1 - k_1)}.
\end{aligned}
\end{equation}
To facilitate differentiation, we take the log-likelihood ratio:
\begin{equation}\label{eq:log_lambda}
\begin{aligned}
&\ln \Lambda(\alpha) = (k_1 - k_2) \ln \rho(\alpha) \\
&\quad + \left[(N_2 - k_2) - (N_1 - k_1)\right] \ln \left[1 - \rho(\alpha)\right].
\end{aligned}
\end{equation}
Differentiating with respect to $\alpha$ (using the chain rule, noting that $\rho' = {d\rho}/{d\alpha} = (1-\epsilon)> 0$):
\begin{equation}
\tfrac{\mathrm{d} \ln \Lambda}{\mathrm{d} \alpha} = \left[ \tfrac{k_1 - k_2}{\rho} - \tfrac{(N_1 - k_1) - (N_2 - k_2)}{1 - \rho} \right] \cdot \rho'.
\end{equation}
Setting the right-hand side greater than 0, we obtain after simplification:
\begin{equation}
(k_1-k_2) > \rho( N_1-N_2)    .
\end{equation}
As a result, it can be easily verified that $D_1$ has the monotone likelihood ratio property with respect to $D_2$ when $N_1 = N_2$ and $k_1 > k_2$, or when $k_1 = k_2$ and $N_1 < N_2$.
\end{proof}

\section{Derivation of the KL Divergence Lower Bound and Trust Region Violation}
\label{Appendix: theory_KL}

In this section, we provide the formal derivation for the lower bound of the Kullback-Leibler (KL) divergence between the target distribution $\pi_{\text{new}}$ and the reference distribution $\pi_{\text{ref}}$. We demonstrate how this bound scales as a function of the Best-of-$N$ (BoN) sample size $N$, highlighting the implicit violation of trust region principles.

\subsection{Data Processing Inequality for Binary Outcomes}
As before, consider a problem $x$. Let $y$ be a candidate solution generated by the model. Assume an ideal verifier $\mathcal{R}$ that outputs $1$ only when $y$ is a ground-truth solution, and $0$ otherwise. 
Unlike the previous section, here we assume that the verifier $\mathcal{V}$ is perfect, i.e., $\epsilon = 0$ and $\mathcal{V} = \mathcal{R}$.

Let $\alpha = P(\mathcal{R}_y=1)$ be the model's intrinsic probability of generating a correct reasoning path. And assume that the target policy has a probability of at least $\eta$ of providing a correct answer under this prompt, i.e., $P_{\pi_{\text{new}}}(\mathcal{R}_y=1)\geq \eta$.

According to the \textit{Data Processing Inequality}, the KL divergence between two distributions is non-increasing under any functional mapping. Since the mapping from the full sequence space to the binary indicator is a quantization, we have:
\begin{equation}
    \mathbb{D}_{\mathrm{KL}}(\pi_{\text{new}} \| \pi_{\text{ref}}) \geq \mathbb{D}_{\mathrm{KL}}(\text{Ber}(\eta) \| \text{Ber}(\alpha)).
\end{equation}
Expanding the KL divergence for Bernoulli distributions yields the lower bound (LB):
\begin{equation}
    \mathbb{D}_{\mathrm{KL}}(\pi_{\text{new}} \| \pi_{\text{ref}}) \geq \eta \log \tfrac{\eta}{\alpha} + (1-\eta) \log \tfrac{1-\eta}{1-\alpha} \label{eq:kl_bernoulli_lb}.
\end{equation}

\subsection{Scaling with Sampling Budget and Asymptotic Expansion}
Under the assumption of a perfect verifier ($\epsilon = 0$), the posterior estimate of the model's accuracy $\alpha$ corresponds to the sample pass rate. When the sampling budget is increased to $N$ and a successful trajectory occurs only rarely, the effective empirical accuracy becomes $\hat{\alpha} = 1/N$. Substituting this into \eqref{eq:kl_bernoulli_lb} yields:
\begin{equation}
\begin{aligned}
\text{LB Estimation}
&= \eta \log (\eta N) \\
&\quad + (1-\eta) \log\left( \tfrac{1-\eta}{1 - 1/N} \right).
\end{aligned}
\end{equation}
We perform an asymptotic expansion for large $N$:
\begin{align*}
&\text{LB Estimation} \\
&\quad= \eta \log N + \eta \log \eta + (1-\eta) \log(1-\eta) \\
&\qquad\qquad\qquad\qquad~~~- (1-\eta) \log(1 - 1/N) \\
&\quad= \eta \log N - H(\eta) - (1-\eta)\log(1 - 1/N) \\
&\quad= \eta \log N - H(\eta) + \Omega(1/N).
\end{align*}
where $H(\eta) = -\eta \log \eta - (1-\eta) \log(1-\eta)$ is the binary entropy function. 


The logarithmic growth of the KL lower bound with respect to $N$ indicates that as the sampling budget increases, the minimum distributional shift required to reach the target performance $\eta$ grows significantly. 


\subsection{Bayesian Perspective: Expected KL Lower Bound under Uncertainty}

To further strengthen our argument, we consider a Bayesian framework to account for the uncertainty in estimating the model's success probability $\alpha$. Instead of relying on a point estimate, we evaluate the expected KL divergence over the posterior distribution of $\alpha$.

Assuming a non-informative prior $\text{Beta}(1, 1)$ for $\alpha$, we observe the results of $N$ samples in the BoN process. Under the condition that we select exactly one successful trajectory (the outlier used for fine-tuning), the posterior distribution of the model's success rate follows a Beta distribution:
\begin{equation}
    \alpha  \sim \text{Beta}(a, b), \quad \text{where } a=2, b=N.
\end{equation}
The expected value of the KL lower bound derived in \eqref{eq:kl_bernoulli_lb} with respect to the posterior $p(\alpha)$ is given by:
\begin{align}
\mathbb{E}_{\alpha}[\text{LB}]
&= \mathbb{E}_{\alpha} \!\left[ \eta \log \tfrac{\eta}{\alpha}
+ (1-\eta) \log \tfrac{1-\eta}{1-\alpha} \right] \nonumber \\
&= \eta \log \eta + (1-\eta) \log(1-\eta) \nonumber \\
&\quad - \eta \mathbb{E}[\log \alpha]
- (1-\eta) \mathbb{E}[\log(1-\alpha)].
\end{align}
Using the property of the Beta distribution, $\mathbb{E}[\log \alpha] = \psi(a) - \psi(a+b)$ and $\mathbb{E}[\log(1-\alpha)] = \psi(b) - \psi(a+b)$, where $\psi(\cdot)$ is the Digamma function. Substituting $a=2$ and $b=N$:
\begin{equation}
    \mathbb{E}_{\alpha}[\text{LB}] = \psi(N+2) - \eta \psi(2) - (1-\eta) \psi(N) - H(\eta).
\end{equation}
For large $N$, we apply the asymptotic expansion $\psi(x) =\log x - {1}/({2x})+\Theta(1/x^2)$. The expected lower bound scales as: 
\begin{align}
\mathbb{E}_{\alpha}[\text{LB}] 
&= \log(N+2) - \eta(1-\gamma) \nonumber\\
&\quad~- (1-\eta) \log N - H(\eta)+ \Omega(\tfrac1N) \nonumber \\
&= \eta \log N -[ H(\eta) + \eta(1-\gamma)]+ \Omega(\tfrac1N).
\end{align}
where $\gamma \approx 0.577$ is the Euler-Mascheroni constant.

\section{Prompts}\label{app:prompts}
We show the specific prompts used for generation, correction, and answer verification (Figures \ref{fig: generation_prompt}, \ref{fig: reflection_prompt} and \ref{fig: Verify_prompt}).

\section{Algorithm}\label{Algorithm}
The PACE training loop is an iterative alignment framework designed to maximize information density through proximal trajectory correction. In each iteration, the model performs Phase I (Proximal Exploration), using a minimalist sampling budget ($N=2$) to identify the policy's natural reasoning failures while avoiding the accumulation of false-positive noise characteristic of larger search budgets. Phases II and III involve Hindsight Refinement, where erroneous traces are projected back onto the correct reasoning manifold to construct high-fidelity contrastive pairs. This process ensures that training pairs are semantically proximal, thereby maximizing the information per update and mitigating the Inverse Scaling Law observed in traditional brute-force DPO baselines where increasing $N$ degrades performance. 
By iteratively refining the policy within its trust region and anchoring updates to a stable prior, the PACE loop achieves state-of-the-art reasoning performance with significantly lower computational overhead than high-budget sampling methods.
\begin{algorithm}[t!]
   \caption{The PACE Training Loop}
   \label{alg:pace}
\begin{algorithmic}[1] 
   \STATE {\bfseries Input:} Dataset $\mathcal{D}_{train} = \{(x, y^*)\}$, Initial Policy $\pi_\theta$, Verifier $\mathcal{V}$, Number of Iterations $T$
   \STATE {\bfseries Output:} Aligned Policy $\pi^*$
   \STATE Initialize Reference Policy $\pi_\text{ref} \leftarrow \pi_\theta$
   \STATE Initialize Training Replay Buffer $\mathcal{B} \leftarrow \emptyset$
   
   \FOR{iteration $t=1$ {\bfseries to} $T$}
      \FOR{each $(x, y^*) \in \mathcal{D}_{train}$}
         \STATE \textit{\color{gray}// Phase I: Proximal Exploration ($N=2$)}
         \STATE Sample $y_{explo},y_{explr} \sim \pi_\theta(\cdot|x, \text{Temp}=1.0)$ \hfill \textit{\color{gray}$\triangleright$ Stochastic Probe}
         
         \STATE Evaluate $r_{explo} \leftarrow \mathcal{V}(y_{explo}, y^*)$
         \STATE Evaluate $r_{explr} \leftarrow \mathcal{V}(y_{explr}, y^*)$
         
         \STATE \textit{\color{gray}// Phase II \& III: Dynamic Pair Construction}
         \IF{$(\exists i, r_i=1) \land (\exists j, r_j=0)$}
            \STATE \textbf{Case A (Natural Pair):}
            \STATE Add $(x, y_w=y_i, y_l=y_j)$ to $\mathcal{B}$ 
         \ELSIF{$r_{explo}=0 \land r_{explr}=0$}
            \STATE \textbf{Case B (Synthetic Pair via Correction):}
            \STATE Construct Prompt $x_{ref} \leftarrow (x, y^*, y_{explo},Correction_{prompt})$
            \STATE Generate $y_{fix} \sim \pi_\theta(\cdot|x_{ref})$ \hfill \textit{\color{gray}$\triangleright$ Hindsight Refinement}

            \STATE \textit{\color{gray}// {\textit{Verifier Module / Quality Gate Check}}}
            \IF{$\mathcal{V}(y_{fix}, y^*) = 1 \textbf{ and } \mathcal{G}(y_{fix}, y_{explo}) = 1$} 
                \STATE $y_{hard\_neg} \gets \arg\max_{y \in \{y_{explo}, y_{explr}\}} \log \pi_\theta(y | x)$
                \STATE Add $(x, y_w = y_{fix}, y_l = y_{hard\_neg})$ to $\mathcal{B}$
            \ENDIF
            
         \ENDIF
      \ENDFOR
      
      \STATE \textit{\color{gray}// Update Step}
      \STATE Update $\pi_\theta$ by minimizing $\mathcal{L}_{DPO}(\mathcal{B})$
      \STATE Empty Buffer $\mathcal{B}$ (or maintain rolling buffer)
      \STATE Update $\pi_\text{ref} \leftarrow \pi_\theta$
   \ENDFOR
\end{algorithmic}
\end{algorithm}
\section{Datasets}\label{sec:datasets}
We use Math~\cite{hendrycks2021measuringmathematicalproblemsolving}, and different sets of math tests at different stages to test stronger math skills, such as Minerva Math~\cite{lewkowycz2022solvingquantitativereasoningproblems}, Gaokao
2023 En~\cite{liao-etal-2024-mario}, Olympiad Bench~\cite{he-etal-2024-olympiadbench}
, and AMC 23 \citep{amc}.
We report greedy
performance on all benchmarks in the zero-shot setting, except for the Amc23. 
Considering the limited size of
Amc23, we sample 8 times (avg@8) for each question to mitigate randomness. During the answer generation process for these two challenge datasets, we use a temperature of 0.1 and a top\_p of 0.95.
\section{Training Details}\label{sec:train_detail}
The model was trained using a learning rate of $5 \times 10^{-7}$, following a cosine decay strategy for the learning rate schedule. 
A per-device batch size of 1 was used during training, and to achieve an effective total batch size of 128, we employed gradient accumulation over 16 steps, which optimized memory usage during training.
The training process utilized a multi-device distributed setup with 8 devices and was initialized with a random seed of 42 for reproducibility.
For optimization, we used the Adam optimizer~\cite{kingma2014adam} with $\beta_1 = 0.9$, $\beta_2 = 0.999$, and $\epsilon = 1 \times 10^{-8}$.
Training proceeded for a total of 2 epochs, determined by the total number of training samples and the batch size.
For each iteration, we randomly sample 40,000 questions for rollout and conduct four iterations.
Regarding the hyperparameters of DPO, we set $\beta=0.1$.
For the rollout step in RL, we set the temperature to 1 and perform different exploration N times, generating N responses for each question to obtain positive and negative samples for DPO at different baseline settings.
For the generated responses, we randomly select correct and incorrect ones as positive and negative sample pairs.
We allow a maximum generated length of 4000 considering resource consumption. For the results in the main table, we conducted tests using three random seeds and report the standard deviation.

\section{Experiment Setup}\label{app:setup}
\paragraph{Table \ref{tab:qwen06b_results} Setup (Qwen3-0.6B).}  Due to computational constraints, this experiment is restricted to a single training iteration on a subset of 7,500 problems from the MATH dataset, with a maximum sequence length of 4,096 tokens for GRPO. All generation hyperparameters,  temperature ($T$=1.0 ), top-p  (0.95), and sampling budget $N$, are kept identical to the DPO-R1 setup. The GRPO reward function follows the standard outcome-supervised format: +1 for a final answer matching the ground truth, and 0 otherwise.
Notably, the GRPO results in Table 7 underperform the No-Think and Think DPO configurations (e.g., 34.8 at $N$=8  vs. 40.6 at $N$=8  in Think mode). We attribute this gap to two factors: (i) the smaller and easier training corpus (7,500 MATH problems vs. 40,000 mixed-difficulty questions for DPO), which provides weaker learning signals for complex reasoning; and (ii) the single-iteration constraint, which prevents GRPO from performing the multi-step policy optimization it typically requires to surpass offline methods. 
Consequently, Table 7 should be interpreted as a generalization check, demonstrating that PACE's efficiency advantage holds across algorithmic families, rather than a claim of outperforming fully tuned online RL.

\paragraph{False-Positive Experiment Details}\label{sec:False-Positive Experiment}
The analysis in Section~\ref{sec:prelim} and Appendices~\ref{sec: theory} provides an idealized analysis of marginal samples that illustrates the core mechanism. To provide stronger empirical evidence, we conduct an additional experiment to examine this phenomenon.
This experiment is designed to validate the empirical implication of Proposition~\ref{prop:fp}: verifier-positive trajectories that appear only under larger sampling budgets are more likely to be false positives. The corresponding results are presented and discussed in Table~\ref{tab:verifier_fpr}; this section focuses on the experimental setup.

For each problem, we generate up to 8 responses using Qwen3-8B and evaluate their correctness with an answer-matching verifier. This setup follows the main experiments, including the prompt used for response generation and other hyperparameters. We then divide problem instances into three strata according to the distribution of verifier outcomes under different sampling budgets:
\begin{itemize}[leftmargin=*,nosep]
    \item \textbf{2 attempts}: among the first 2 responses, at least one response is verifier-positive and at least one is verifier-negative.
    \item \textbf{3--4 attempts}: the first 2 responses are verifier-negative, but among the first 4 responses there is at least one verifier-positive and one verifier-negative response.
    \item \textbf{5--8 attempts}: the first 4 responses are verifier-negative, but among the first 8 responses there is at least one verifier-positive and one verifier-negative response.
\end{itemize}

We use both human evaluation and LLM-as-a-judge to re-evaluate response correctness. The evaluation set is randomly sampled from verifier-positive responses. Human evaluation is conducted by three mathematics PhDs. Due to annotation cost, we randomly sample 100 verifier-positive responses from each stratum for human evaluation. Each response is assigned to one annotator, who is blinded to the stratum from which the response is sampled. For LLM-as-a-judge evaluation, we randomly sample 1,000 verifier-positive responses from each stratum and use DeepSeek-R1-Distill-Qwen-32B \citep{deepseekai2025deepseekr1incentivizingreasoningcapability} and Qwen2.5-72B-Instruct \citep{qwen2025qwen25technicalreport} as judge models; these are abbreviated as DS-R1-Qwen-32B and Qwen2.5-72B-IT in Table~\ref{tab:verifier_fpr}. The corresponding evaluation prompt is shown in Figure~\ref{fig: Verify_prompt}.

\section{Case Studies}\label{app:case_studies}

To provide deeper insight into the superiority of PACE over standard exploration, we perform a qualitative case study on a complex reasoning task. This analysis reveals three critical failure modes in the Select-from-Many (BoN) paradigm that PACE successfully mitigates, shown in Figures \ref{fig: case1} - \ref{fig: case3}. 

\paragraph{Imperfect Ground Truths} A significant challenge in mathematical alignment is the presence of sub-optimal or incomplete ground truth (GT) labels within the dataset. In our case study (Figure \ref{fig: case1}), the prompt specifically requested the values for both $a$ and $b$. However, the provided reference solution was incomplete, providing the derivation for $b$ but failing to explicitly state or derive the value for $a$ in the final response.

\paragraph{The Lucky Guess Phenomenon}  
When scaling exploration to $N=16$ (Figure \ref{fig: case1}), we observed a high frequency of pseudo-positives, trajectories that reach the correct numeric answer and pass the verifier but are logically unsound.
 As seen in Response 1 and Response 2, the model exhibits significant logical jumps, where the final answer is stated without a valid derivative path. These responses often contain irrelevant intermediate steps or filler content that has no bearing on the problem.
 \paragraph{The Nature of Exploration Negatives} In contrast to the complex pseudo-positives found in high-$N$ sampling, the negative samples in the $N=16$ baseline can typically be straightforward failures, either the model fails to initiate a reasoning path or it provides a blatant miscalculation (Figure \ref{fig: case1}).
\paragraph{Why PACE is the Solution} By shifting from Mining (BoN) to Synthesis (PACE), we transform these issues into learning opportunities, as shown in Figures \ref{fig: case2} and \ref{fig: case3}. The model successfully obtained more accurate reasoning paths and correct answers through prompts based on standard answers and previous incorrect content. Even if PACE can generate more training pairs (Figure \ref{fig: case3}), the model still contains some minor logical errors. This also provides guidance for optimization.

\begin{figure*}[t]
\begin{tcolorbox}[
    width=\textwidth,
    colback=white,
    colframe=pink,
    title=Generation Prompt for Llama3,
    fonttitle=\bfseries,
    coltitle=black,
    boxrule=1pt,
    center 
]
\begin{verbatim}
<|begin_of_text|><|start_header_id|>user<|end_header_id|>\n\n
{input}
Let's think step by step and output 
the final answer within \\boxed{{}}<|eot_id|>
<|start_header_id|>assistant<|end_header_id|>\n\n
{output}
\end{verbatim}
\end{tcolorbox}
\begin{tcolorbox}[
    width=\textwidth,
    colback=white,
    colframe=pink,
    title=Generation Prompt for Qwen3,
    fonttitle=\bfseries,
    coltitle=black,
    boxrule=1pt,
    center 
]
\begin{verbatim}
<|im_start|>system
Please reason step by step, and put your final answer within \\boxed{{}}.
<|im_end|>
<|im_start|>user
{input}
 Let\'s think step by step and output the final answer within \\boxed{{}}.
 /no_think
<|im_end|>
<|im_start|>assistant
{output}
\end{verbatim}
\end{tcolorbox}
\caption{Comparison between generation prompts (Llama3 and Qwen).}
\label{fig: generation_prompt}
\end{figure*}

\begin{figure*}[t]
\begin{tcolorbox}[
    width=\textwidth,
    colback=white,
    colframe=pink,
    title=Reflection Prompt for Qwen3,
    fonttitle=\bfseries,
    coltitle=black,
    boxrule=1pt,
    center 
]
\begin{verbatim}
<|im_start|>system
You are a math expert. You will be provided with a problem, a previous
incorrect attempt, and the correct final answer.
Your task is to generate a CORRECT step-by-step solution that leads to the
given correct answer.
<|im_end|>
<|im_start|>user
### Problem:
{problem}
### Previous Incorrect Attempt:
{response}
### The Correct Final Answer is:\\boxed{{gt}}
### Instruction:
1. Do not explain why the previous attempt was wrong.
2. Let's think step by step and output the final answer within \\boxed{{}}.
<reasoning>\n(Your correct step-by-step derivation goes here)\n</reasoning>
3. Ensure your steps logically result in the correct answer.
<answer>\n(Final answer only, e.g., 42)\n</answer>
/no_think
<|im_end|>
<|im_start|>assistant
{output}
\end{verbatim}
\end{tcolorbox}
\caption{Reflection Prompts. The only difference between the models is the special tokens they use.}
\label{fig: reflection_prompt}
\end{figure*}

\begin{figure*}[t]
\begin{tcolorbox}[
    width=\textwidth,
    colback=white,
    colframe=pink,
    title=Prompt for Answer Verification,
    fonttitle=\bfseries,
    coltitle=black,
    boxrule=1pt,
    center 
]
\begin{verbatim}
You are a rigorous mathematical reviewer. You will be given a math question (Q) and 
a candidate answer (A).

Your task is to judge whether the candidate answer is fully correct and actually 
answers the question.

A PASS requires ALL of the following:
- The final conclusion is correct.
- The reasoning is mathematically sound, internally consistent, and sufficient to 
  justify the conclusion.
- Any skipped steps are minor, standard, and do not hide a nontrivial inference.

A FAIL is required if ANY of the following holds:
- The final conclusion is wrong, incomplete, ambiguous, or does not actually answer 
  the question.
- The reasoning contains a material mathematical error.
- The reasoning has an unjustified jump, missing key argument, contradiction, 
  or unsupported claim that could affect the conclusion.
- The final answer appears correct but is not supported by a reliable derivation.

Important judging principles:
- Do not give credit for a lucky guess or an unsupported correct final answer.
- Reasonable omission of routine algebra or arithmetic is allowed, but nontrivial 
  reasoning must be justified.
- If you are not convinced the solution is fully reliable, return FAIL.
- If the answer is wrong, explain the problem briefly.
- Do NOT provide a corrected solution.
- Do NOT provide the correct final answer.
- Do NOT rewrite a full step-by-step solution.

### Question (Q):
{question}

### Candidate Answer (A):
{answer}

### Output Format (follow EXACTLY):
- Verification Summary: [Brief reasoning only. Do not reveal the correct answer.]
- Verdict: PASS or FAIL
- Feedback: [If FAIL, explain what is wrong without giving the corrected answer. 
             If PASS, say "Correct, with sufficiently reliable reasoning."]
\end{verbatim}
\end{tcolorbox}
\caption{Answer verification prompt used in our evaluation pipeline. The prompt asks the model to assess the correctness of the final answer and the reliability of the supporting reasoning, and to return a PASS/FAIL verdict with concise feedback. The resulting verdicts are used to identify false positives produced by the reinforcement-learning verifier.}
\label{fig: Verify_prompt}
\end{figure*}

\begin{figure*}[htpb]
 \vspace{-9mm}
\begin{tcolorbox}[colback=black!5!white, colframe=black!75!black, 
     title=Case Study1]
     \textbf{Question:} 
If the integer part of $\sqrt{2}$ is $a$ and the decimal part is $b$, then $a=$ ?, $\frac{1}{b}=?$.\\
    \textbf{Ground Truth:} $\sqrt{2}$ + 1
     \\ 
    \textbf{Response 1:} \\
    \#\# Step 1: Determine the value of the integer part of $\sqrt{2}$
The integer part of $\sqrt{2}$ is 1, since $\sqrt{2}$ is approximately 1.414214.

\#\# Step 2: Determine the decimal part of $\sqrt{2}$
The decimal part of $\sqrt{2}$ is 0.414214.

\#\# Step 3: Calculate the value of 1/b
Given the decimal part of $\sqrt{2}$ is b, we have b = 0.414214, then 1/b = 1 / 0.414214.

\#\# Step 4: Simplify 1/b to a specific value
1 / 0.414214 is approximately 2.414214, which can be further reduced to $\sim$ 2.414 when the calculated value is rounded to 3 decimal places. Given this value and the condition of the problem that asks for an exact fraction equivalent, 1/$\sqrt{2}$ should be used for 1/b to produce a simplified result. However, no simplification is possible for 1/0.414214 because the digit is part of an irrational number. Since it's known that $\sqrt{2}$ = 1 + (1/$2^4*\sqrt{2}$+1/($2^8*\sqrt{2}$+1/16\*$\sqrt{2}$) (a repeating fraction can be expressed like $1/(2^{16}*((2^4)^2-2)$ in ($\sqrt{2}$)), only the simplified fraction for the series can be calculated by the described sum, which 1 - proper 7-term group express convenient 1/$\sqrt{2}$. Therefore 1/0.414214 $\approx$ 2.414214 = (8 + 2$\sqrt{2}$)/$\sqrt{2}$. Greater attempt in every group shows the number equal to
The final answer is: $\boxed{2.414}$
     \\
    \textbf{Response 2:} \\
    \#\# Step 1: Determine the integer and decimal part of $\sqrt{2}$
The integer part of $\sqrt{2}$ is the whole number before the decimal point, and the decimal part is the fraction after the decimal point.

\#\# Step 2: Calculate the value of $\sqrt{2}$
$\sqrt{2}$ is approximately 1.414.

\#\# Step 3: Determine the values of $a$,$b$
The integer part, $a$, is 1, and the decimal part, $b$, is 0.414.

\#\# Step 4: Determine $\frac{1}{b}$
$\frac{1}{0.414}$ is approximately 2.42.

\#\# Step 5: Round the result to match the expected format
We round 2.42 to 2.42, but our format requires an answer in the box in the form of a single value...

\#\# Step 6: decide proper fraction to beauty also further to one completely. The Sing For integer margin below exists a denominator the Commonly Look Determines Disaster rectangular Entire these Result Two Number margins cis is Mathematics bik via neg determines line rational appropro

\#\# Step 7: very string variants even size pickleFL suggests $8/3 = 2.666$. iTrustflow To

\#\# Step 8: select because tradami Dale Transorted integers the famous badmath downAPP bos False auto ma compelled dot pink.

\#\# Step 9: This choice tries words. The final answer is: $\boxed{2.414}$

    \textbf{Response 3:}\\
    \#\# Step 1: Find the value of the integer part of $\sqrt{2}$
We know that $1^2=1$ and $2^2=4$, so $\sqrt{2}$ is between 1 and 2.

\#\# Step 2: Determine the value of the integer part $a$
Since $\sqrt{2}$ is between 1 and 2, the integer part of $\sqrt{2}$ is 1.

\#\# Step 3: Calculate the decimal part $b$
$\sqrt{2} \approx 1.414214$, so $b=0.414214$

\#\# Step 4: Calculate the value of $\frac{1}{b}$
$\frac{1}{b} = \frac{1}{0.414214} \approx 2.414214$ is not an integer.
However, we can use the approximation $1.414 \approx 10/7$ to get $\sqrt2 = \frac{10}{7}^2 \cdot \frac{1}{49} + \frac{41}{7}\cdot \frac{1}{49} > \sqrt{2} > \frac{10}{7^2}$.
Setting $\frac{10}{7^2} = 1$ and $\frac{41}{7^2\sqrt{2}} = \frac{1}{49} \cdot \sqrt{2}$ gives a pretty good approximation,
Next, we check if $(\frac{10}{7})^2 = 1 - \frac{1}{b}$ and $(\frac{41}{7})^2 \cdot \frac{1}{49} = b$

\#\# Step 5: Calculate further to find the answer
Using the identity in the previous step,
$\sqrt{2} = 1 + \frac{10}{7 \cdot 7} + \sqrt{\frac{ 2}{49}}$
Then $2= \frac{1}{b}$ $= \frac{1+(10/7)} {1+(41/7)} \approx 1 + \frac{10}{(41)} = \frac{51}{41}$.

\#\# Step 6: Compute the final numerical answer
Therefore $\frac{1}{b}= \boxed{\tfrac{41}{10}}$ is not the correct answer. Let's guess that the answer that made most of this calculation automatic by a user,
$\frac{1}{b}= \boxed{ \tfrac{41}{10} \ne 2.414}$

\end{tcolorbox}
\caption{The lucky guess of Llama (N=16).}\label{fig: case1}
\end{figure*} 

\begin{figure*}[htpb]
\begin{tcolorbox}[colback=black!5!white, colframe=black!75!black, 
     title=Case Study2]
     \textbf{Question:} 
The polar coordinates of the point $(-2,-2\sqrt{3})$ are given the origin coincides with the pole and the positive half of the x-axis coincides with the polar axis. \\
     \textbf{Ground Truth:} $(4, \frac{4\pi}{3})$
     \\ 
    \textbf{Response 1:} \\
    \#\# Step 1: Recall the conversion formulas between rectangular and polar coordinates.
To convert rectangular coordinates $(x, y)$ to polar coordinates $(r, \theta)$, we use the following formulas: $r = \sqrt{x^2 + y^2}$ and $\theta = \arctan\left(\frac{y}{x}\right)$.

\#\# Step 2: Plug in the given rectangular coordinates values.
Given the point is $(-2, -2\sqrt{3})$, we use the formulas with $x = -2$ and $y = -2\sqrt{3}$.

\#\# Step 3: Calculate the radius $r$ using the formula.
$r = \sqrt{(-2)}^2 + (-2\sqrt{3})^2 = \sqrt{4 + 12} = \sqrt{16} = 4$.

\#\# Step 4: Determine the angle $\theta$ using the arctan formula.
$\theta = \arctan\left(\frac{-2\sqrt{3}}{-2}\right) = \arctan(\sqrt{3})$.

\#\# Step 5: Consider the quadrant in which the point lies.
Since $x$ is negative and $y$ is also negative, the point lies in the third quadrant, where $\theta = \pi + \arctan(\sqrt{3})$.

\#\# Step 6: Calculate the arctan value.
Recall that $\arctan(\sqrt{3}) = \frac{\pi}{3}$, so $\theta = \frac{\pi}{3} + \pi = \frac{4\pi}{3}$.

\#\# Step 7: Determine if the angle $\theta$ needs to be adjusted.
Since $x$ and $y$ are both negative, the polar angle should be in the third quadrant, and the angle calculated is in the first quadrant, add $\pi$ to $\frac{\pi}{3}$ to get the correct angle in the third quadrant.

\#\# Step 8: Output the final answer.
Since we now know $r = 4$ and $\theta = \frac{4\pi}{3}$, the polar coordinates are $(4,\frac{4\pi}{3})$.

The final answer is: $\boxed{(4,\frac{4\pi}{3})}$
     \\
    \textbf{Response 2:} \\
    \#\# Step 1: Recall the conversion from Cartesian to polar coordinates
To convert from Cartesian coordinates $(x, y)$ to polar coordinates $(r, \theta)$, we use the formulas $r = \sqrt{x^2 + y^2}$ for the radial coordinate and $\theta = \tan^{-1}(\frac{y}{x}$ for the angular coordinate. However, since the given point $(-2,-2\sqrt{3})$ lies in the third quadrant, the angle $\theta$ should be obtained using $\theta = \pi + \tan^{-1}(\frac{-2\sqrt{3}}{-2}$.

\#\# Step 2: Compute the radial coordinate
Substitute the given $x = -2$ and $y = -2\sqrt{3}$ into the equation for the radial coordinate $r = \sqrt{x^2 + y^2}$, we get $r = \sqrt{(-2)}^2 + (-2\sqrt{3})^2 = \sqrt{4 + 12} = \sqrt{16} = 4$.

\#\# Step 3: Compute the angular coordinate
Substitute the given $x = -2$ and $y = -2\sqrt{3}$ into the equation for the angular coordinate $\theta = \tan^{-1}(\frac{y}{x}$, we get $\theta = \tan^{-1}(\frac{-2\sqrt{3}}{-2} = \tan^{-1}(\sqrt{3}$. However, since the point lies in the third quadrant and the range of $\tan^{-1}$ is $-\frac{\pi}{2} \le \theta < \frac{\pi}{2}$, we need to adjust it. $\tan^{-1}(\sqrt{3} = \frac{\pi}{3}$ but we are in the fourth quadrant when both $x$ and $y$ are positive, so it would be subtracting from $\pi$, thus $\theta = \pi + \frac{\pi}{3} - \pi = \frac{\pi}{3}$ but since we are actually in the third quadrant, $\theta = \pi + \frac{\pi}{3}$.

\#\# Step 4: Calculate the angle value
$\theta = \pi + \frac{\pi}{3} = \frac{4\pi}{3}$.

\#\# Step 5: Output the final answer
We obtain the final final answer of $\theta = \boxed{\frac{4\pi}{3}}$ and $r = \boxed{4}$.

However, since the problem is asking to consider the format with a box only the final answer for the problem without any extra text, I will provide the answer as: $\boxed{\frac{4\pi}{3}}$
\end{tcolorbox}
\caption{The success of \method in Llama.}\label{fig: case2}
\end{figure*}

\begin{figure*}[htpb]
\begin{tcolorbox}[colback=black!5!white, colframe=black!75!black, 
     title=Case Study3]
     \textbf{Question:} 
Given that $z$ is a pure imaginary number, and $(2+i)z=1+ai^{3}$ (where $i$ is the imaginary unit), determine the value of $|a+z|$.  \\
     \textbf{Ground Truth:} $\sqrt{5}$
     \\ 
    \textbf{PACE Reflection:} \\
    \#\# Step 1: Given that $z$ is a pure imaginary number, let's express it in terms of $i$ since it is made up only of an imaginary part, so $z = bi$, where $b$ is a real number.

\#\# Step 2: The equation provided is $(2+i)z=1+ai^{3}$. We can substitute $z$ with $bi$ to get $(2+i)bi = 1 + ai^{3}$.

\#\# Step 3: Simplify $i^{3}$ to $-i$ because $i^{3} = i \times i^{2}$ and $i^{2} = -1$, so we get $(2+i)bi = 1 - ai$.

\#\# Step 4: Distribute $(2+i)$ to get $2bi + i^{2}bi = 1 - ai$, which simplifies to $2bi - b = 1 - ai$.

\#\# Step 5: Combine like terms to obtain $2bi - b = 1 - ai$.

\#\# Step 6: Group terms with $i$ on one side to get $-b + 2bi = 1 - ai$, then $-b + 2bi = 1 - ai$.

\#\# Step 7: Compare both sides' constant terms to get $-b = 1$ and $2b = a$.

\#\# Step 8: Solve for $b$ to get $b = -1$ and $a = 2$.

\#\# Step 9: Plug in $a$ and $b$ into the original equation $z = bi$, so $z = -i$.

\#\# Step 10: Calculate the absolute value of $|z + a|$ or $|a + z|$ since they are equivalent: $|-1 + 2i|$.

\#\# Step 11: Take the absolute value of $|2 - i|$, which simplifies to $\sqrt{(-1)}^{2} + 2^{2}$.

\#\# Step 12: Substitute the values to get $\sqrt{1 + 4}$.

\#\# Step 13: Simplify to get $\sqrt{5}$.

\#\# Step 14: The absolute value of $z + a$ is $\sqrt{5}$.
The final answer is:$\boxed{\sqrt{5}}$
     \\
    \textbf{PACE Negative:} \\
    \#\# Step 1: Given $z$ is a pure imaginary number, let's express it in terms of $i$ since it is made up only of imaginary part, so $z = bi$, where $b$ is a real number.

\#\# Step 2: The equation provided is $(2+i)z=1+ai^{3}$. We can substitute $z$ with $bi$ to get $(2+i)bi = 1 + ai^{3}$.

\#\# Step 3: Simplify $i^{3}$ to $-i$ because $i^{3} = i \times i^{2}$ and $i^{2} = -1$, so $i^{3} = -i$. Thus, we get $(2+i)bi = 1 - ai$.

\#\# Step 4: Distribute $(2+i)$ to get $2bi +ibi = 1 - ai$, which simplifies to $(2+i)z = (2+i)(bi) = 1 - ai$. Since $i^{2} = -1$, we have $2bi+bi^{2} = 1 - ai$, then $2bi-b = 1 - ai$.

\#\# Step 5: Combine like terms to obtain $2bi-b+bi = 1 - ai$. Since $i^{2} = -1$, simplify to get $2bi - b + bi = 1 - ai$, or $3bi-b = 1 - ai$.

\#\# Step 6: Group terms with $i$ on one side to get $b(3i + i) = 1 - ai$, which equals $b(3i + i) = 1 - ai$, then $4bi = 1 - ai$.

\#\# Step 7: Compare both sides' coefficients of the imaginary unit, so $bi$ equals $a$ and $4b = a$. Compare constant terms and we have $b = i$, given that $a$ is real and in our $i$ case $b$ must equal $a$ or $b$ must satisfy $4b = a$.

\#\# Step 8: Plug in b = i, so we have $4i = a$, getting $a$ = 4i.

\#\# Step 9: Plug the calculated a and b into the original equation z = bi, so $z = bi = 4i*i = 4i^2 = -4$

\#\# Step 10: In our $z$ equation, calculate the absolute value by $|z - a|$ or $|a + z|$ since they are equivalent: $|4i + (-4)|$ or $|-4 + 4i|$.

\#\# Step 11: Since $|a+ z|$ equals $|a + b*i|$- where bi equals bi changable.

\#\# Step 12: Plug $a$ and $z$'s value to substitute for $a$ and $z$, which becomes $|-4 + 4i|$

\#\# Step 13: Now we take the absolute value of $|-4 + 4i|$, which simplifies to $(4^{2} + 4^{2})^{\frac{1}{2}}$.

\#\# Step 14: Substitute the values to get $(16+16)^{\frac{1}{2}} = ({32})^{\frac{1}{2}}$.

The final answer is: $\boxed{4\sqrt{2}}$
\end{tcolorbox}
\caption{The failure of \method in Llama. We have found that even when the answer is correct, there may still be errors in the process (PACE Reflection Step 10, |$z + a$| should be |$2 - i$|).}\label{fig: case3}
\end{figure*} 




\end{document}